
\typeout{IJCAI-19 Instructions for Authors}


\documentclass{article}
\pdfpagewidth=8.5in
\pdfpageheight=11in
\usepackage{ijcai19}

\usepackage{times}
\usepackage{soul}
\usepackage{url}
\usepackage[hidelinks]{hyperref}
\usepackage[utf8]{inputenc}
\usepackage[small]{caption}
\usepackage{graphicx}
\usepackage{amsmath}
\usepackage{booktabs}
\usepackage{algorithm}
\usepackage{algorithmic}
\urlstyle{same}

\usepackage{amsmath}\allowdisplaybreaks
\usepackage{xspace,amssymb,epsfig}
\usepackage{dsfont}
\usepackage{algorithm}
\usepackage{algorithmic}

\usepackage{ifthen}

\newcommand{\RR}{\mathbb{R}}

\newcommand{\bC}{\boldsymbol{C}}
\newcommand{\bD}{\boldsymbol{D}}
\newcommand{\bI}{\boldsymbol{I}}

\newcommand{\bR}{\boldsymbol{R}}
\newcommand{\bS}{\boldsymbol{S}}
\newcommand{\bT}{\boldsymbol{T}}
\newcommand{\bU}{\boldsymbol{U}}

\newcommand{\bb}{\boldsymbol{b}}
\newcommand{\bi}{\boldsymbol{i}}
\newcommand{\bj}{\boldsymbol{j}}
\newcommand{\bp}{\boldsymbol{p}}

\newcommand{\bx}{\boldsymbol{x}}
\newcommand{\by}{\boldsymbol{y}}
\newcommand{\bvarepsilon}{\boldsymbol{\varepsilon}}

\newcommand{\btheta}{\boldsymbol{\theta}}

\newcommand{\cH}{\boldsymbol{\mathcal{H}}}

\newcommand{\argmax}{\mathrm{argmax}}

\newcommand{\norm}[1]{\left\| #1 \right\|}
\newcommand{\abs}[1]{\left| #1 \right|}

\newtheorem{theorem}{Theorem}

\newtheorem{lemma}[theorem]{Lemma}
\newenvironment{claim}[1]{\par\noindent\underline{Claim:}\space#1}{}
\newenvironment{proof}{\noindent {\textbf{Proof. }}}{$\Box$ \medskip}


\usepackage[capitalize,noabbrev]{cleveref}
\usepackage{tikz}
\usetikzlibrary{shapes,shapes.misc,positioning}
\usetikzlibrary{patterns,arrows,decorations.pathreplacing}
\newcommand{\EE}[1]{\mathbb{E} \left[#1\right]}
\newcommand{\EEt}[1]{\mathbb{E}_t \left[#1\right]}
\newcommand{\PP}[1]{\mathbb{P}\left[#1\right]}
\newcommand{\ceil}[1]{\left\lceil #1 \right\rceil}
\newcommand{\ip}[1]{\langle #1 \rangle}

\newcommand{\linucbone}{{\tt LinUCB-One}\xspace}
\newcommand{\linucbind}{{\tt LinUCB-Ind}\xspace}
\newcommand{\club}{{\tt CLUB}\xspace}
\newcommand{\sclub}{{\tt SCLUB}\xspace}
\newcommand{\citet}[1]{\citeauthor{#1} \shortcite{#1}}

\newif\ifsup\supfalse
\suptrue

\newcommand{\compilehidecomments}{false}

\ifthenelse{ \equal{\compilehidecomments}{true} }{%
	\newcommand{\wei}[1]{}
	\newcommand{\qinshi}[1]{}
}{
	\newcommand{\wei}[1]{{\color{blue!50!black}  [\text{Wei:} #1]}}
	
}





\title{Improved Algorithm on Online Clustering of Bandits}



\author{
Shuai Li$^1$
\and
Wei Chen$^2$\and
Shuai Li$^3$\And
Kwong-Sak Leung$^1$
\affiliations
$^1$The Chinese University of Hong Kong\\
$^2$Microsoft\\
$^3$University of Cambridge
\emails
\{shuaili, ksleung\}@cse.cuhk.edu.hk,
weic@microsoft.com,
shuaili.sli@gmail.com
}

\begin{document}

\maketitle

\begin{abstract}
We generalize the setting of online clustering of bandits by allowing non-uniform distribution over user frequencies. A more efficient algorithm is proposed with simple set structures to represent clusters. We prove a regret bound for the new algorithm which is free of the minimal frequency over users. The experiments on both synthetic and real datasets consistently show the advantage of the new algorithm over existing methods.
\end{abstract}

\section{Introduction}

The problem of stochastic multi-armed bandit (MAB) \cite{bubeck2012regret,LS18book} has been widely studied in the field of statistics and machine learning, where the learner selects an action each round with the goal of maximizing the cumulative rewards over all rounds (or equivalently minimize the cumulative regret). 
One successful application of MAB algorithms is recommendation systems \cite{aggarwal2016recommender}, such as recommendations of movies, restaurants and news articles. For the setting where new users and items constantly appear where simultaneous exploration and exploitation are naturally needed, bandit algorithms can cope with the ``cold start'' problem and improve itself as time goes on.

To adjust bandit algorithms for large-scale applications, structural assumptions are usually added on actions and reward functions, among which linear structure is one of the most common due to its simplicity and effectiveness. Each item in the stochastic linear bandits \cite{abe1999associative,rusmevichientong2010linearly,dani2008stochastic,abbasi2011improved} is represented by a feature vector whose expected reward is an unknown linear function on the feature vector. In many systems certain side information about users and items are employed as the feature vectors to guarantee recommendation quality \cite{li2010contextual,chu2011contextual}.

Usually user features are not accurate enough or even do not exist to characterize user preferences. When user features exist, these static features are usually combined together with item features to be actions in linear bandit models. This processing might have implacable modeling bias and also neglect the often useful tool of collaborative filtering. One way to utilize the collaborative effect of users is to discover their clustering structure. Clustering methods have been proved to be important in supervised/unsupervised recommendations \cite{linden2003amazon,woo2014cluster}. The works \cite{gentile2014online,li2016collaborative} start to bring these useful techniques to bandit algorithms. These online clustering of bandit algorithms adaptively learn the clustering structure over users based on the collaborative recommendation results to gather information of users' similarity and still keep certain diversity across users.

There are two main challenges in the existing works on online clustering of bandits \cite{gentile2014online,li2016collaborative,korda2016distributed,gentile2017context,li2016art}. (a) The existing works assume users come to be served in a uniform manner. However, as stated by long tail effect and $80/20$ rule, the imbalance between users always exist especially with a large number of low-frequency users. The low-frequency users deteriorate the performance of existing algorithms. (b) The existing works use graphs to represent users, use edges to denote the similarity between users and use connected components to represent clusters. 
As a result, the learner may split two dissimilar users into two clusters only when it cuts every path between these two users. Even when the edge (representing the similarity) between two users are cut, the two users might still need to stay in the same cluster for a long time. Therefore these algorithms are not efficient in identifying the underlying clusters.

To overcome the first problem, we include frequency properties into the criterion of an underlying clustering structure. To overcome the second problem, we split a user out of her current cluster once we find the current cluster contains other users dissimilar to this one. Since the split operation might be a bit radical and premature, we add a new merge operation to reconcile the radical effect of the split operation. Based on these two building operations, complex graph structure can be discarded and replaced by simple set structures where each cluster is represented by a set of users.

This paper makes four major contributions:
\begin{enumerate}
\item We generalize the setting of online clustering of bandits to allow non-uniform frequency distributions over users.

\item We design both split and merge operations on clusters together with set representations of clusters to accelerate the process of identifying underlying clusters.

\item We analyze the regret of our new algorithm to get a regret bound free of the term $1/p_{\min}$, where $p_{\min}$ is the minimal frequency probability among all users and the existing algorithms could not avoid it.

\item We compare our algorithm to the existing algorithms on both synthetic and real datasets. A series of experiments consistently show the advantage of our algorithm over the existing ones.
\end{enumerate}

\subsection{Related work} 

The most related works are a series of studies on online clustering of bandits. \citet{gentile2014online} first establish the setting of online clustering of bandits and first show the effectiveness of using graph structure to represent clustering over users. A follow-up \cite{li2016collaborative} considers the collaborative filtering effects on both users and items but only under the setting of static item pool. They maintain a clustering over items and for each item cluster there is a clustering over all users. Their analysis is only built on a special case that the items form an orthonormal basis in the feature space. Based on \cite{gentile2014online}, \citet{korda2016distributed} design a distributed version of confidence ball algorithms in peer to peer networks with limited communications where the peers in the same cluster solve the same bandit problem. \citet{gentile2017context} present a context-aware clustering bandit algorithm where the clustering structure depends on items, though with a high computational cost. All these works assume the user frequency distribution is uniform and use connected components of graphs to represent clusters. 

Besides these works with new algorithms and theoretical guarantees, there are many progress of applying these online algorithms in real applications. \citet{nguyen2014dynamic} design a variant of clustering of bandit algorithms by performing k-means clustering method \cite{macqueen1967some} on the estimates of user weight vectors with a known number of clusters. 
\citet{christakopoulou2018learning} present a variant of clustering of bandit algorithms based on Thompson sampling and allow recommending a list of items to users.
\citet{kwon2018overlapping} allows the underlying clustering has overlapping.
All these works focus on experimental performance and do not include theoretical guarantees, especially on regret bounds.
\citet{li2018online} study an application of using clustering of bandit algorithms to improve the performance of recommending a list of items under a specific click model, and they provide a regret guarantee.

There is a group of works on stochastic low-rank bandits \cite{katariya2017stochastic,katariya2017bernoulli,kveton2017stochastic}. They assume the reward matrix of users and items are of low-rank and the learner selects a user-item pair each time. The goal of the learner is to find the user-item pair that has the maximum reward. This is different from the setting of online clustering of bandits where there is no control over user appearances and the goal of the learner is to maximize satisfaction of each appearing user.

The works on linear bandits are cornerstones of clustering of bandits, where the latter has different weight vectors and adaptively clusters users. It is first studied by \cite{abe1999associative} and refined by \cite{dani2008stochastic,abbasi2011improved} and many others. \citet{turgay2018multi} present a similar work to make use of collaborative results but they assume the similarity information is known.

\section{Model and Problem Definitions}
\label{sec:setting}

In this section, we formulate the setting of ``online clustering of bandits''. There are $n_u$ users, denoted by the set $[n_u] = \{1, \ldots, n_u\}$, with a fixed but {\em unknown} distribution $p = (p_1, \ldots, p_{n_u}) \in \Delta_{n_u}$ over users. Here $\Delta_{n} = \{ x \in [0,1]^n: \sum_{i=1}^n x_i = 1 \}$ denotes the simplex in $\RR^n$. There is also a fixed but {\em unknown} distribution $\rho$ over $\{x \in \RR^d: \norm{x} \le 1\}$, which is the set of feature vectors for all items, and we simply call such a feature vector an item. At each time $t$, a user $\bi_t \in [n_u]$ is randomly drawn from the distribution $p$, and $L$ items are drawn independently from distribution $\rho$ to form a feasible item set $\bD_t$. The learning agent receives the user index $\bi_t$ and the feasible item set $\bD_t$, selects an item $\bx_{t} \in \bD_t$ and recommends it to the user. After the user checks the item $\bx_t$, the learning agent receives a reward $\by_t$. The setting follows the previous work \cite{gentile2014online} but allows non-uniform distribution over users.
	
Let $\cH_t = \{\bi_1, \bx_1, \by_1, \ldots, \bi_{t-1}, \bx_{t-1}, \by_{t-1}, \bi_t\}$ be all the information after receiving the user index in time $t$. Then the action $\bx_{t}$ is $\cH_{t}$-adaptive. Henceforth, we will write $\EEt{\cdot}$ for $\EE{\cdot \mid \cH_{t}}$ for the sake of notational convenience, use the boldface symbols to denote random variables, and denote $[n]$ to be the set $\{1,\ldots,n\}$.
	
For any time $t$, given a fixed user $i$ at time $t$ and a fixed item $x \in \bD_t$ selected for user $i$, we define the random reward of item $x$ for user $i$ to be $\by_t(i,x) = \theta_i^\top x + \bvarepsilon_{i, t, x}$, where (a) $\theta_{i}$ is a fixed but {\em unknown} weight vector in $\RR^{d \times 1}$ with $\norm{\theta_{i}} \le 1$, independently of other items and other user behaviors; and (b) $\bvarepsilon_{i,t,x}$ is an $\cH_t$-conditional random noise with zero mean and $R$-sub-Gaussian tail, i.e. $\EE{\exp(\nu\bvarepsilon_{i,t,x})} \le \exp(R^2 \nu^2/2)$ for every $\nu \in \RR$. Then the mean reward is $\EE{\by_t(i,x)} = \theta_i^\top x$. At time $t$, the learning agent recommends $\bx_t$ to user $\bi_t$ and receives reward $\by_t(\bi_t, \bx_t)$, which satisfies
\begin{align*}
    \EEt{\by_t(\bi_t, \bx_t) \mid \bi_t, \bx_t} = \EE{\by_t(\bi_t, \bx_t) \mid \cH_t, \bi_t, \bx_t} = \theta_{\bi_t}^{\top} \bx_t\,.
\end{align*}

Suppose there are $m$ (unknown) different weight vectors in the set $\{\theta_i :  i \in [n_u]\}$. An underlying clustering exists over the users according to the weight vectors, where the users with the same weight vector form a cluster. The goal is to learn the underlying clustering to help the recommendation process. We make the following assumptions on the weight vectors $\{\theta_i : i \in [n_u]\}$, the distribution $p$ over users, and the distribution $\rho$ on items.

	
\paragraph{Gap between weight vectors.} For any two different weight vectors $\theta_{i_1} \neq \theta_{i_2}$, there is a (fixed but unknown) gap $\gamma$ between them: $\norm{\theta_{i_1} - \theta_{i_2}} \ge \gamma > 0$.

\paragraph{Item regularity.} For item distribution $\rho$, $\mathbb{E}_{\bx \sim \rho}[\bx \bx^{\top}]$ is full rank with minimal eigenvalue $\lambda_x > 0$. Also at all time $t$, for any fixed unit vector $\theta \in \RR^d$, $(\theta^{\top} \bx)^2$ has sub-Gaussian tail with variance parameter $\sigma^2 \le \lambda_x^2 /(8 \log(4 L))$. We assume a lower bound for $\lambda_x$ is known.
	
\paragraph{Gap between user frequencies.} Users with same weight vectors will have same frequencies: $\theta_{i_1} = \theta_{i_2}$ implies that $p_{i_1} = p_{i_2}$. Also for any two different user frequency probabilities $p_{i_1} \neq p_{i_2}$, there is a (fixed but unknown) gap $\gamma_p$ between them: $\abs{p_{i_1} - p_{i_2}} \ge \gamma_p > 0$.

All the assumptions except the last one follow the previous work \cite{gentile2014online}. The first part of last assumption could be relaxed. Discussions are provided in a later section.
	
The optimal item for user $\bi_t$ in round $t$ is $x_{\bi_t, \bD_t}^\ast = \argmax_{x \in \bD_t} \theta_{\bi_t}^\top x$. Then the expected regret for user $\bi_t$ in time $t$ is $\bR_t = \theta_{\bi_t}^\top x_{\bi_t, \bD_t}^\ast - \theta_{\bi_t}^\top \bx_{t}$. The goal of the learning agent is to minimize the expected cumulative regret 
\begin{align*}
    R(T) = \EE{\sum_{t=1}^{T} \bR_t} = \EE{\sum_{t=1}^{T} \left( \theta_{\bi_t}^\top x_{\bi_t, \bD_t}^\ast - \theta_{\bi_t}^\top \bx_{t} \right) }\,,
\end{align*}
where the expectation is taken over the randomness of users $\bi_1, \ldots, \bi_T$, the randomness of the item sets $\bD_1, \ldots, \bD_T$ and the possible randomness in selected items $\bx_1, \ldots, \bx_T$.
	
\section{Algorithm}
\label{sec:algorithms}

\begin{algorithm}[t]
\caption{\sclub}
\label{alg:sclub}
\begin{algorithmic}[1]
\STATE \textbf{Input}: exploration parameter $\alpha_{\theta}, \alpha_p > 0$, and $\beta > 0$;
\STATE \label{algoline: user initialization} Initialize information for each user $i \in [n_u]$ by $S_i = \bI_{d \times d}, b_i = \mathbf{0}_{d \times 1}, T_i = 0$;
\STATE \label{algoline: cluster initialization} Initialize the set of cluster indexes by $J = \{1\}$ and initialize the single cluster by $S^1 = \bI_{d \times d}, b^1 = \mathbf{0}_{d \times 1}, T^1 = 0, C^1 = [n_u], j(i) = 1,\forall i$;
\FOR{\label{algoline: for s} $s =1, 2, \dots$}
\STATE \label{algoline: mark users unchecked} Mark every user unchecked for each cluster;
\STATE \label{algoline: set pivot} For each cluster $j$, compute $\tilde{\bT}^j = \bT^j$, 
and $\tilde{\btheta}^j = (\bS^j)^{-1} \bb^j$;
\FOR{\label{algoline: for t} $t = 1, \ldots, 2^s$}
\STATE \label{algoline: compute total time step tau} Compute the total time step $\tau = 2^s - 2 + t$;
\STATE \label{algoline: receive user and item set Dt} Receive a user $\bi_{\tau}$ and an item set $\bD_{\tau} \subset \RR^d$;
\STATE \label{algoline: get cluster index and its associated information} Get the cluster index $\bj = \bj(\bi_\tau)$ and its associated information $(\bS^{\bj}, \bb^{\bj}, \bT^{\bj})$;
\STATE \label{algoline: recommend item to the user with largest U} Recommend item $\bx_{\tau} = \argmax_{x \in \bD_{\tau}} \bU(x)$ where
\begin{align*}
\bU(x) = (\bb^{\bj})^\top (\bS^{\bj})^{-1} x + \beta \norm{x}_{(\bS^{\bj})^{-1}}
\end{align*}
to user $\bi_{\tau}$ and receive feedback $\by_{\tau}$
\STATE Run \textbf{Update}
\STATE Run \textbf{Split}
\STATE Mark user $\bi_\tau$ has been checked;
\STATE Run \textbf{Merge}
\ENDFOR{~~$t$}
\ENDFOR{~~$s$}
\end{algorithmic}
\end{algorithm}

\begin{algorithm}[t]
\caption{Update}
\label{alg:update}
\begin{algorithmic}
\STATE Update the information for user $\bi_{\tau}$ and cluster $\bj$
\begin{align*}
&\bS_{\bi_{\tau}} = \bS_{\bi_\tau} + \bx_{\tau} \bx_{\tau}^{\top},  \quad \bb_{\bi_\tau} = \bb_{\bi_\tau} + \by_{\tau} \bx_{\tau}, \\
&\bT_{\bi_\tau} = \bT_{\bi_\tau} + 1, \quad \hat{\bp}_{\bi_{\tau}} = \bT_{\bi_\tau} / \tau, \\
&\hat{\btheta}_{\bi_\tau} = \bS_{\bi_{\tau}}^{-1} \bb_{\bi_\tau},\\
&\bS^{\bj} = \bS^{\bj} + \bx_{\tau} \bx_{\tau}^{\top},  \quad \bb^{\bj} = \bb^{\bj} + \by_{\tau} \bx_{\tau}, \\
&\bT^{\bj} = \bT^{\bj} + 1, \quad \hat{\bp}_{i'} = \bT_{i'} / \tau, ~~\forall i' \in \bC^{\bj},\\
&\hat{\btheta}^{\bj} = (\bS^{\bj})^{-1} \bb^{\bj}\,.
\end{align*}		
\end{algorithmic}
\end{algorithm}

\begin{algorithm}[t]
\caption{Split}
\label{alg:split}
\begin{algorithmic}
\STATE $F(T) = \sqrt{\frac{1 + \ln(1 + T)}{1 + T}}$;
\IF{ \label{algoline: check split} $\norm{\hat{\btheta}_{\bi_\tau} - \tilde{\btheta}^{\bj}} > \alpha_{\theta} \left(F(\bT_{\bi_\tau}) + F(\tilde{\bT}^{\bj})\right)$ or there exists user $i' \in \bC^{\bj}$ with $\abs{\hat{\bp}_{\bi_\tau} - \hat{\bp}_{i'}} > 2 \alpha_p F(\tau)$}
\STATE /\ /\ Split user $\bi_\tau$ from cluster $\bj$ and form a new cluster $j' (= \max J + 1)$ of user $\bi_\tau$
\begin{align*}
&\bS^{\bj} = \bS^{\bj} - \bS_{\bi_\tau} + \bI, \quad \bb^{\bj} = \bb^{\bj} - \bb_{\bi_\tau}, \\
&\bT^{\bj} = \bT^{\bj} - \bT_{\bi_{\tau}}, \quad \bC^{\bj} = \bC^{\bj} - \{\bi_\tau\};\\
&\bS^{j'} = \bS_{\bi_{\tau}}, \quad \bb^{j'} = \bb_{\bi_\tau}, \quad \bT^{j'} = \bT_{\bi_{\tau}}, \quad \bC^{j'} = \{\bi_\tau\}
\end{align*}
\ENDIF
\end{algorithmic}
\end{algorithm}

\begin{algorithm}[t]
\caption{Merge}
\label{alg:merge}
\begin{algorithmic}
\FOR{ any two checked clusters $j_1 < j_2$ satisfying
\begin{align*}
\norm{\hat{\btheta}^{j_1} - \hat{\btheta}^{j_2}} < \frac{\alpha_{\theta}}{2} \left(F(\bT^{j_1}) + F(\bT^{j_2})\right)
\end{align*}
and
\begin{align*}
\abs{\hat{\bp}^{j_1} - \hat{\bp}^{j_2}} < \alpha_p F(\tau) 
\end{align*}
}
\STATE /\ /\ Merge them: add information of cluster $j_2$ to cluster $j_1$ and remove cluster $j_2$
\begin{align*}
&\bS^{j_1} = \bS^{j_1} + \bS^{j_2} - \bI\,, \quad \bb^{j_1} = \bb^{j_1} + \bb^{j_2}\,, \\
&\bT^{j_1} = \bT^{j_1} + \bT^{j_2}\,, \quad \bC^{j_1} = \bC^{j_1} \bigcup \bC^{j_2}\,; \\
\vspace{-0.2in}
&j(i) = j_1 ~~\forall i \in \bC^{j_2} \text{ and delete cluster } j_2
\end{align*}
\ENDFOR{}
\end{algorithmic}
\end{algorithm}

\ifsup
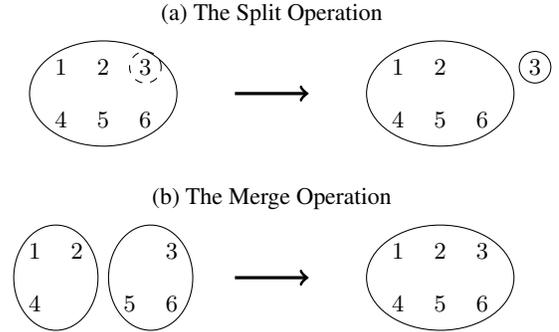
\begin{figure}[thb!]
\centering
\begin{tikzpicture}[scale=0.7,font=\small]

\node at (0,0) {(a) The Split Operation};
\draw[->, very thick] (-0.7,-1.5) -- (0.7,-1.5);
\node at (-4, -1) {$1$};
\node at (-3.2, -1) {$2$};
\node at (-2.4, -1) {$3$};
\node at (-4, -2) {$4$};
\node at (-3.2, -2) {$5$};
\node at (-2.4, -2) {$6$};
\draw (-3.2,-1.5) ellipse (1.4cm and 1cm);
\draw[dashed] (-2.4,-1) circle (0.3cm);

\node at (2.4, -1) {$1$};
\node at (3.2, -1) {$2$};
\node at (2.4, -2) {$4$};
\node at (3.2, -2) {$5$};
\node at (4, -2) {$6$};
\draw (3.2,-1.5) ellipse (1.4cm and 1cm);
\node at (5, -1) {$3$};
\draw (5,-1) circle (0.3cm);

\node at (0,-3.5) {(b) The Merge Operation};
\draw[->, very thick] (-0.7,-5) -- (0.7,-5);
\node at (-4.5, -4.5) {$1$};
\node at (-3.7, -4.5) {$2$};
\node at (-1.9, -4.5) {$3$};
\node at (-4.5, -5.5) {$4$};
\node at (-2.7, -5.5) {$5$};
\node at (-1.9, -5.5) {$6$};
\draw (-4.1,-5) ellipse (0.8cm and 1cm);
\draw (-2.3,-5) ellipse (0.8cm and 1cm);

\node at (2.4, -4.5) {$1$};
\node at (3.2, -4.5) {$2$};
\node at (4, -4.5) {$3$};
\node at (2.4, -5.5) {$4$};
\node at (3.2, -5.5) {$5$};
\node at (4, -5.5) {$6$};
\draw (3.2,-5) ellipse (1.4cm and 1cm);
\end{tikzpicture}
\caption{The illustrations of the split and merge operations on sets.}
\label{fig:split merge}
\end{figure}
\fi

In this section, we introduce our algorithm of ``set-based clustering of bandits (\sclub)'' to deal with the online clustering of bandit problem. Recall that the related previous work \cite{gentile2014online,li2016collaborative} adopted connected components in graphs to represent clusters and the learning process only split clusters. In contrast, our algorithm uses sets to represent clusters and allow both split and merge operations on sets in the learning process. 
\ifsup
\cref{fig:split merge} provides an illustration of the split and merge operations on sets. The learner will split a user out of the current set if it finds inconsistency between the user and the cluster (\cref{fig:split merge}a); the learner will merge two existing clusters if they are close enough (\cref{fig:split merge}b). The meanings of inconsistency and similarity will be explained later. 
\fi
The pseudocode is provided in \cref{alg:sclub}.

In the algorithm, we store a profile of $(\bS_i, \bb_i, \bT_i)$ for each user $i$, where $\bT_i$ is the number of times that the user $i$ has appeared, $\bS_i$ is the Gramian matrix and $\bb_i$ is the moment vector of regressand by regressors. For example, if $(x_1, y_1), \ldots, (x_n, y_n)$ are pairs of items and corresponding rewards collected for user $i$ before time $t$, then 
\begin{align*}
    T_i = n\,, \quad S_i = \sum_{k=1}^{n} x_k x_k^\top\,, \quad b_i = \sum_{k=1}^{n} x_k y_k\,.
\end{align*}
The user profiles are initialized in the beginning of the algorithm (line \ref{algoline: user initialization}), and the clustering is initialized to be a single set with index $1$ containing all users (line \ref{algoline: cluster initialization}). We store a profile of $(\bS^j, \bb^j, \bT^j, \bC^j)$ for each cluster $j$, where $\bC^j$ is the set of all user indexes in this cluster and
\begin{align*}
\bS^j = \bI+\sum_{i \in \bC^j} (\bS_i - \bI), \quad \bb^j = \sum_{i \in \bC^j} \bb_i, \quad \bT^j = \sum_{i \in \bC^j} \bT_i
\end{align*}
are the aggregate information in the cluster $j$. As a convention, we use subscript $\bS_i, \bb_i, \bT_i$ to denote the information related to a user $i$ and use superscript $\bS^j, \bb^j, \bT^j$ to denote the information related to a cluster $j$.

The learning agent proceeds in phases (line \ref{algoline: for s}), where the $s$-th phase contains $2^s$ rounds (line \ref{algoline: for t}). Each phase has an accuracy level such that the learner could put accurate clusters aside and focus on exploring inaccurate clusters. In the beginning of each phase, mark every user to be unchecked (line \ref{algoline: mark users unchecked}), which is used to identify good clusters in the current accuracy level. If all users in a cluster are checked, then this cluster is {\it checked} to be an accurate cluster in the current phase. Compute the estimated weight vectors for each cluster (line \ref{algoline: set pivot}), which serve as the pivots to be compared for splitting its containing users. Note that the pivot weight vectors are denoted as $\tilde{\btheta}^j$ instead of $\hat{\btheta}^j$.

At each time $t$ in the $s$-th phase, a user $\bi_{\tau}$ comes to be served with candidate item set $\bD_{\tau}$ (line \ref{algoline: receive user and item set Dt}), where $\tau$ denotes the index of the total time step (line \ref{algoline: compute total time step tau}). 
First obtain the information $(\bS^{\bj}, \bb^{\bj}, \bT^{\bj})$ for the cluster $\bj$ of user $\bi_\tau$ (line \ref{algoline: get cluster index and its associated information}) and then recommend based on these information (line \ref{algoline: recommend item to the user with largest U}). Here $\norm{x}_A = \sqrt{x^\top A x}$ for any positive definite matrix $A$.
After receiving feedback $\by_\tau$, the learner will {\it update} information (Algorithm \ref{alg:update}) and check if a split is possible (Algorithm \ref{alg:split}). 

By the assumptions in \cref{sec:setting}, users in one cluster have same weight vector and same frequency. Thus if a cluster is a good cluster, which only contains users of same underlying cluster, the estimated parameters for containing users will be close to estimated parameters of the cluster. We call a user $i$ is \textit{consistent} with the current cluster $j$ if $\hat{\theta}_i$ is close to $\hat{\theta}^j$ and $\hat{p}_i$ is close to $\hat{p}_{i'}$ for any other user $i'$ of cluster $j$. If a user is inconsistent with the current cluster, the learner will split her out (\cref{alg:split}).

We call two checked clusters \textit{consistent} if their estimated weight vectors and estimated frequencies are close enough. The algorithm will merge two checked clusters if they are consistent (Algorithm \ref{alg:merge}). Here $\hat{p}^j = T^j / (\abs{C^j} \tau)$ is the average frequency of cluster $j$. The condition of clusters to be \textit{checked} is to avoid merging two clusters that are not accurate enough in the current level. The operations of split and merge, together with the conditions, can guarantee that the clusters are good at the end of each phase in the corresponding accuracy level with high probability (the analysis is part of the regret bound proof). 

\section{Results}

Recall that $d$ is the feature dimension, $m$ is the (unknown) number of clusters and $n_u$ is the number of users. Let $n^j$ be the (unknown) number of users in cluster $j$ and $p^j$ be the (unknown) frequency for cluster $j$, which is the sum of (unknown) user frequencies in cluster $j$. The following main theorem bounds the cumulative regret of our algorithm \sclub.
\begin{theorem}
\label{thm:main}
Suppose the clustering structure over the users, user frequencies, and items satisfy the assumptions stated in \cref{sec:setting} with gap parameters $\gamma, \gamma_p > 0$ and item regularity parameter $0 < \lambda_x \le 1$. Let $\alpha_{\theta} = 4 R \sqrt{d/\lambda_x}$, $\alpha_p = 2$ and
$\beta = R \sqrt{d \ln(1 + T / d) + 2 \ln(4 m n_u)}$. Then the cumulative regret of the algorithm \sclub after $T$ rounds satisfies
\begin{align} \label{eq:SCLUBregret}
R(T) &\le \sum_{j=1}^m 4 \beta \sqrt{d p^j T \ln(T/d)}  \notag \\
&\quad + O\left( \left(\frac{1}{\gamma_p^2} + \frac{n_u}{\gamma^2 \lambda_x^3}\right) \ln(T) \right)\\
&= O(d\sqrt{m T} \ln(T))\,.
\end{align}
\end{theorem}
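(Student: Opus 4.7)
The plan is to decompose the cumulative regret into a pre-identification phase during which the algorithm's set clusters may not coincide with the underlying partition $\{C_j^\ast\}_{j=1}^m$, and a post-identification phase during which they do match the truth. The post-identification regret contributed by each true cluster $j$ is controlled by applying a standard OFUL/\linucbone regret analysis to the aggregated observations inside that cluster; since cluster $j$ receives $\bT^j\approx p^j T$ samples, this produces the $4\beta\sqrt{d p^j T\ln(T/d)}$ term. Summing over $j$ and applying Cauchy--Schwarz via $\sum_j\sqrt{p^j T}\le\sqrt{mT}$ yields the claimed $O(d\sqrt{mT}\ln T)$ rate.

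First I would collect high-probability concentration events: (a) for every user $i$ and phase index $s$, the ridge estimators $\hat{\btheta}_i$ and the cluster-level $\hat{\btheta}^j$ satisfy a self-normalized tail bound with radius $\beta$; and (b) the empirical frequencies $\hat{\bp}_i=\bT_i/\tau$ concentrate around $p_i$ at the Hoeffding rate $F(\tau)$. Using the item regularity assumption (full-rank $\mathbb{E}[\bx\bx^\top]$ with minimum eigenvalue $\lambda_x$ together with the sub-Gaussian condition on $(\theta^\top\bx)^2$), I would convert (a) into the Euclidean bound $\norm{\hat{\btheta}_i-\theta_i}\le \tfrac{\alpha_\theta}{2}F(\bT_i)$ once $\bT_i$ exceeds a threshold of order $d/\lambda_x^2$, matching the scales used in the split and merge rules of Algorithms~\ref{alg:split} and~\ref{alg:merge}.

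The core correctness step is to show that there is a phase index $s_0$ with $2^{s_0}=O\left((1/\gamma_p^2+n_u/(\gamma^2\lambda_x^3))\ln T\right)$ after which the algorithm's clustering coincides with the true partition. I would prove this by induction on $s$, maintaining the invariant that on the good event every surviving cluster is contained in a single true cluster. For two users $i,i'$ placed together but belonging to different true clusters, either $\norm{\theta_i-\theta_{i'}}\ge\gamma$ (so the $\btheta$-test in Split fires once both $\bT_i$ and $\tilde{\bT}^j$ cross the $\Theta(d/(\lambda_x\gamma^2))$ scale, which is reached after $O(n_u/(\gamma^2\lambda_x^3)\ln T)$ total rounds by a uniform pigeonhole over users), or $\abs{p_i-p_{i'}}\ge\gamma_p$ (so the $\bp$-test fires once $\tau=\Omega(\ln T/\gamma_p^2)$). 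Crucially, the $\bp$-test does not require the rare user to have appeared many times---it only needs the common user's empirical frequency to separate from the rare user's empirical frequency---which is what removes the $1/p_{\min}$ dependence. Conversely, the same concentration guarantees imply that no true cluster gets falsely split, and that two clusters lying inside one true cluster satisfy the Merge conditions as soon as both become \emph{checked}.

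The main obstacle is the interaction between Split and Merge across phases: the \emph{checked} gating must keep merges safe (never merge across true clusters) yet still fire promptly, and the factor-two discrepancy between the split threshold $2\alpha_p F(\tau)$ for users and the merge threshold $\alpha_p F(\tau)$ for clusters is essential for avoiding oscillation between the two operations. I would handle this by choosing $s_0$ so that in phase $s_0$ every user has been observed sufficiently often for her home cluster to become checked with high probability, which then forces all remaining valid merges and completes the partition. The pre-identification regret is trivially bounded by $2^{s_0+1}=O((1/\gamma_p^2+n_u/(\gamma^2\lambda_x^3))\ln T)$, giving the additive second term in \eqref{eq:SCLUBregret}; combining this with the per-cluster OFUL bound in the post-identification regime yields the theorem.
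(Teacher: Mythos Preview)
Your overall decomposition into a pre-identification and post-identification phase, with the post-identification regret handled by OFUL per true cluster, matches the paper. The gap is in how you bound the length of the pre-identification phase.

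You assert that a \emph{single} phase index $s_0$ with $2^{s_0}=O\bigl((1/\gamma_p^2+n_u/(\gamma^2\lambda_x^3))\ln T\bigr)$ exists after which the algorithm's clustering coincides with the truth, and you justify the $n_u$ factor by a ``uniform pigeonhole over users'' so that every $\bT_i$ crosses the $\Theta(d/(\lambda_x\gamma^2))$ threshold. That pigeonhole only works when the user distribution is uniform. Under an arbitrary distribution $p$, a user with frequency $p_{\min}$ needs $\tau\ge \Theta\bigl(1/(p_{\min}\gamma^2\lambda_x^3)\ln T\bigr)$ before her $\bT_i$ is large enough for the $\btheta$-test to be reliable; two true clusters can share the same frequency (the assumption only says same $\theta$ implies same $p$, not the converse), so the frequency test cannot rescue you here. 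Your single-$s_0$ argument therefore reproduces exactly the $1/p_{\min}$ dependence the theorem is supposed to avoid, and the trivial bound ``pre-identification regret $\le 2^{s_0+1}$'' would give the \club\ rate of \cref{thm:club}, not \eqref{eq:SCLUBregret}.

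The paper's fix is a per-user accounting rather than a global stopping time. For each user $i$ it defines a time $T_0(i)=O\bigl(\tfrac{1}{\gamma_p^2}\ln T+\tfrac{1}{p_i\gamma^2\lambda_x^3}\ln T+\tfrac{1}{p_i}\ln T\bigr)$ after which \emph{that user's} running cluster is guaranteed (w.h.p.) to equal her true cluster: first the frequency test isolates her from users of other frequencies (your observation that this step does not need many visits of the rare user is correct), and only then does the $\btheta$-test, which does need $\bT_i$ large, separate same-frequency users with different $\theta$. The regret attributable to user $i$ before $T_0(i)$ is at most $p_i\cdot T_0(i)$, and the crucial cancellation $p_i\cdot(1/p_i)=1$ removes the dependence on $p_{\min}$; summing over $i$ gives $\sum_i p_i T_0(i)=O\bigl(1/\gamma_p^2+n_u/(\gamma^2\lambda_x^3)\bigr)\ln T$, which is the additive term in \eqref{eq:SCLUBregret}. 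You should replace the uniform $s_0$ argument with this user-by-user bookkeeping.
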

\begin{proof}[sketch]
The proof is mainly based on two parts. The first part bounds the exploration rounds to guarantee the clusters partitioned correctly. The second part is to estimate regret bounds for linear bandits after the clusters are partitioned correctly.

By Chernoff-Hoeffding inequality, we prove when
$
\tau \ge O\left(\frac{1}{\gamma_p^2}\ln(T) + \frac{1}{p_i}\ln(T) \right)
$
the distance of $p_i$ and $\hat{p}_i$ is less than $\gamma_p/2$. Then the users with different frequencies will be assigned to different clusters. Thus, the cluster containing user $i$ only contains users with the same frequency probabilities, for all user $i$.

Then by the assumption of item regularity, when
$
\tau \ge O \left(\frac{1}{\gamma_p^2}\ln(T) + \frac{1}{p_i}\ln(T) + \frac{1}{p_i \gamma^2 \lambda_x^3} \ln(T) \right)
$
the estimated weight vectors will be accurate enough for all the users with the frequency probability same as $p_i$. The 2-norm radius for the confidence ellipsoid of the weight vector will be less than $\gamma/2$. Thus the split and merge operations will function correctly for users. Next, we prove that if two clusters containing the users of same frequency probabilities have accurate estimates for weight vectors and frequencies, the combined cluster inherits the accuracies with high probability. Thus, the combination steps can continue until it contains all users of an underlying cluster.

After the cluster $j$ with frequency probability $p^j$ is correctly partitioned, the recommendation is based on the estimates of cluster weight vector. The regret for the second part reduces to linear bandits with $p^j T$ rounds.
\end{proof}

The proof is much different from that of \club \cite{gentile2014online} with the additional frequency probability and merge operation. 
The full proof is put in 
\ifsup
\cref{sec:proof of sclub}.
\else
the supplementary materials.
\fi

\subsection{Discussions}

First, we compare our regret bound with that of \club. Since \club is designed and proved only for uniform distribution over users, we first generalize the regret bound of \club to the setting of arbitrary distribution over users.
\begin{theorem}
\label{thm:club}
Under the same setting with \sclub, the cumulative regret of the algorithm, \club, after $T$ rounds satisfies
\begin{align*} \label{eq:CLUBregret}
R(T) &\le \sum_{j=1}^m 4 \beta \sqrt{d p^j T \ln(T/d)} + O\left(\frac{1}{p_{\min} \gamma^2 \lambda_x^3} \ln(T) \right)\\
  &= O(d \sqrt{m T} \ln(T))\,.
\end{align*}
\end{theorem}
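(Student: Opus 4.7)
My plan is to follow the same two-stage decomposition used in the proof sketch for Theorem~\ref{thm:main}: first bound the number of rounds needed before \club's graph has been pruned to exactly the true clustering, then invoke a per-cluster linear-bandit regret bound for the remaining rounds. The key structural difference from \sclub is that \club uses connected components of a similarity graph: a pair of users from different underlying clusters remains mis-merged as long as \emph{any} path of un-cut edges connects them. Consequently, to guarantee correct recovery we must estimate $\theta_i$ to accuracy $\gamma/4$ for \emph{every} user $i$ simultaneously, not merely cluster-level estimates as in \sclub. This is the source of the $1/p_{\min}$ factor and is where the proof diverges from the \sclub analysis.

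For the exploration phase, I would first apply a Chernoff bound to the i.i.d.\ indicator variables $\mathbf{1}\{\bi_t = i\}$ to show that $\bT_i \ge p_i \tau / 2$ with probability $1 - O(1/T^2)$ once $\tau \ge c\,\ln(T)/p_i$ for a suitable constant $c$. Then, under the item regularity assumption (minimal eigenvalue $\lambda_x$ of $\mathbb{E}_{\bx\sim\rho}[\bx\bx^\top]$ together with the sub-Gaussian tail condition on $(\theta^\top \bx)^2$), a matrix Chernoff / Freedman argument shows that $\bS_i/\bT_i \succeq (\lambda_x/2) \bI$ with high probability once $\bT_i = \Omega(\ln(T)/\lambda_x^2)$. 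Combined with the standard self-normalized concentration bound for the ridge estimator, this yields $\|\hat{\btheta}_i - \theta_i\| \le \beta / \sqrt{\lambda_x \bT_i}$ up to logs. Setting this below $\gamma/4$ gives the per-user sample requirement $\bT_i = \Omega(\ln(T)/(\gamma^2 \lambda_x^3))$, which translates into $\tau = \Omega(\ln(T)/(p_i \gamma^2 \lambda_x^3))$ calendar rounds; taking $i$ to minimize $p_i$ produces the stated $1/(p_{\min}\gamma^2\lambda_x^3)\ln(T)$ dependence. Once this many rounds have elapsed, any edge between two users in different underlying clusters must be cut (their estimates differ by more than $\gamma/2$), while every edge within a cluster is retained, so the connected components of \club's graph coincide exactly with the true clusters. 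A union bound over users and rounds pays only logarithmic factors.

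For the exploitation phase, after correct recovery each user in cluster $j$ is served using the cluster-aggregated ridge estimator built from all observations of that cluster. Since $\bi_t$ hits cluster $j$ with probability $p^j$ and is drawn independently of past history, Chernoff gives that cluster $j$ is played at most $\approx 2 p^j T$ times, and the data collected are the interactions of a single linear bandit with parameter equal to the common weight vector $\theta^j$. Applying the standard OFUL/\linucbone regret bound with confidence width $\beta$ to this sub-sequence yields a regret of $4\beta\sqrt{d\, p^j T\,\ln(T/d)}$ for cluster $j$; summing over $j$ gives the first term of the bound and, by Cauchy–Schwarz on $\sum_j \sqrt{p^j T} \le \sqrt{mT}$, the overall $O(d\sqrt{mT}\ln T)$ rate.

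The main obstacle, as already signaled, is the simultaneous concentration required across all users: because connected components propagate information through every surviving edge, one under-sampled user can delay correct recovery indefinitely. Handling this cleanly requires a careful union bound over all users and all times during the exploration phase, and justifies why the $1/p_{\min}$ factor cannot be replaced by $n_u$ as in the \sclub analysis (which can amputate a mis-clustered user immediately via its split rule, needing only cluster-level — not user-level — accuracy to make progress).
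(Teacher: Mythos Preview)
Your proposal is correct and follows essentially the same approach as the paper's proof: a two-stage decomposition where (i) a per-user concentration argument (matrix eigenvalue lower bound plus self-normalized ridge concentration, cf.\ Lemma~\ref{lem:theta-hat-good-when-T}) yields $\|\hat\theta_i-\theta_i\|<\gamma/4$ simultaneously for all users once $\tau=\Omega\bigl(\ln(T)/(p_{\min}\gamma^2\lambda_x^3)\bigr)$, at which point \club's graph collapses to the true clustering, and (ii) the per-cluster OFUL bound of \cite{abbasi2011improved} is applied afterwards and summed over clusters. The paper's proof is terser (it invokes external lemmas for the Chernoff-on-visits and eigenvalue steps that you spell out), but the structure, the key quantities, and in particular the identification of $1/p_{\min}$ as arising from the need for \emph{uniform} user-level accuracy are identical.
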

Note that in the degenerate setting where the distribution $p$ over users is uniform, $p_{\min} = 1 / n_u$ and the regret bound recovers the one in the previous work \cite{gentile2014online}.

In real applications, there might be a lot of infrequent users with frequency close to $0$. Since the algorithm \club initializes with a complete graph or a random connected graph, there would be a lot of infrequent users connecting with frequent users. Infrequent users receive a recommendation once a long time. Thus it takes a long time to differentiate them from frequent users, or to be split into a different cluster (connected component) from frequent users. Then the recommendations for frequent users would be polluted by these infrequent and dissimilar users for a long time. The length of rounds to separate infrequent dissimilar users is proportional to $1/p_{\min}$, the minimal frequency over all users. $p_{\min}$ could be arbitrarily small in real applications, the $\log$ term in regret bound for \club is not satisfactory. Due to an additional care on frequencies, \sclub could get rid of the uncontrollable term $1 / p_{\min}$.

The assumption that users with same weight vectors have same frequencies is based on the intuition that frequent users and infrequent users usually have different preferences. It can be relaxed to that users with same weight vectors have close frequencies. For more general cases, a new structure of nested clusters could help where the infrequent users could use the information of frequent users but not vice versa. We leave this as interesting future work.

Second, our result can be generalized by allowing multiple users each round. In applications the recommender system will update itself every a fixed time slot, and during each time slot, multiple users will be served using the same history information. 
\begin{theorem}
\label{thm:multiple users}
Under the same setting with \sclub, each user appear with frequency $p_i \in [0, 1]$, then the cumulative regret of the algorithm after $T$ rounds satisfies
\begin{align*}
R(T) &\le \sum_{j=1}^m 4 \beta \sqrt{d p^j T \ln(T/d)} \\
&\quad + O\left( \left(\frac{\bar{p}}{\gamma_p^2} + \frac{n_u}{\gamma^2 \lambda_x^3}\right) \ln(T) \right)
\end{align*}
where $\bar{p} = \sum_{i = 1}^{n_u} p_i$ is the sum of all user frequencies.
\end{theorem}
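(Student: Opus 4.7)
The plan is to mimic the proof of \cref{thm:main} step by step, with the key modification that in each round $\tau$ the set of arriving users is drawn as $n_u$ independent Bernoulli trials with parameters $p_1,\dots,p_{n_u}$, so the total number of user-appearances in $\tau$ rounds concentrates around $\bar p\tau$ and each user $i$ appears approximately $p_i\tau$ times. First I would re-derive the two key concentration bounds in this new probability model: (i) by Chernoff--Hoeffding on $\tau$ independent Bernoulli($p_i$) trials, with high probability $|\hat p_i-p_i|\le \gamma_p/4$ once $\tau\ge c\ln(T)/\gamma_p^2$, uniformly in $i$; and (ii) by the item-regularity assumption together with the same self-normalized argument used for \cref{thm:main}, $\|\hat\btheta_i-\theta_i\|\le \gamma/4$ once user $i$ has been served $\Omega(\ln(T)/(\gamma^2\lambda_x^3))$ times. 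Event~(ii) therefore holds with high probability after $\tau\ge \Omega(\ln(T)/(p_i\gamma^2\lambda_x^3))$ rounds.

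Once events (i) and (ii) are in force for every user in an underlying cluster $j^\ast$, the cluster-correctness lemma that underpins \cref{thm:main} carries over verbatim: the split test never fires between two truly co-clustered users (their $\hat\btheta$'s and $\hat\bp$'s agree within the respective confidence radii), and the merge test fires between any two checked sub-clusters of $j^\ast$ within one phase. The only change relative to \cref{thm:main} is that the frequency threshold is expressed in the round index $\tau$ rather than in a user-appearance count, since now several users may contribute within one round; this does not affect the combinatorial argument that runs phase-by-phase over the dyadic schedule of \cref{alg:sclub}.

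The regret decomposition then splits into three pieces. After all users in every underlying cluster have crossed both thresholds, cluster $j$ receives in expectation $p^jT$ appearances and is handled by \linucbone on the pooled data; the standard linear-bandit regret bound contributes the leading $\sum_j 4\beta\sqrt{dp^jT\ln(T/d)}$ term exactly as in \cref{eq:SCLUBregret}, with $p^j$ now interpreted as the general cluster frequency rather than a probability mass. For user-appearances preceding the stabilization of event~(i): the frequency event stabilizes by round $\tau=O(\ln(T)/\gamma_p^2)$, and the expected number of user-appearances in those rounds is $\bar p\cdot O(\ln(T)/\gamma_p^2)$, producing the $\bar p/\gamma_p^2$ factor. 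For user-appearances preceding the stabilization of event~(ii): the threshold $\Omega(\ln(T)/(\gamma^2\lambda_x^3))$ is a bound on $\bT_i$, not on $\tau$, so user $i$ contributes at most $O(\ln(T)/(\gamma^2\lambda_x^3))$ bad appearances, and summing over the $n_u$ users gives the $n_u/(\gamma^2\lambda_x^3)$ factor.

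The main obstacle is the book-keeping. We need a single high-probability event that simultaneously controls (a) all per-user Bernoulli frequency estimates, (b) all per-user self-normalized regression estimates, and (c) all per-cluster estimates after every possible merge or split, while keeping the total failure probability at $O(1/T)$ so that failure contributes only an additive $O(1)$ term. A careful union bound over users, dyadic phases, and the at most $n_u$ clusters that can exist at any time — combined with the key observation that per round only \emph{appearing} users incur regret — is what yields the $\bar p$ (rather than $\bar p\cdot n_u$) coefficient on $1/\gamma_p^2$ and keeps the $n_u$ (rather than $n_u/p_{\min}$) coefficient on $1/(\gamma^2\lambda_x^3)$, matching the stated bound.
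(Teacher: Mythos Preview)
Your treatment of the lower-order terms is sound and indeed parallels the proof of \cref{thm:main}: the frequency-concentration threshold is a condition on the round index $\tau$, so the expected number of appearances incurred before it holds is $\bar p\cdot O(\ln(T)/\gamma_p^2)$, and the weight-vector threshold is a condition on $\bT_i$, so each user contributes at most $O(\ln(T)/(\gamma^2\lambda_x^3))$ bad appearances. That part is fine.

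The genuine gap is in your leading term. You write that after clustering stabilizes, ``the standard linear-bandit regret bound contributes the leading $\sum_j 4\beta\sqrt{d p^j T\ln(T/d)}$ term exactly as in \cref{eq:SCLUBregret}.'' But the paper explicitly identifies precisely this step as the \emph{key challenge} of \cref{thm:multiple users}, not the exploration analysis. In the multiple-user setting, all users arriving in round $t$ are served with the \emph{same} Gram matrix $\bS^j_{t-1}$ (the system ``will update itself every a fixed time slot, and during each time slot, multiple users will be served using the same history information''). The standard elliptical-potential/self-normalized argument from \cite{abbasi2011improved} assumes the design matrix is updated after every single action; when up to $n^j$ actions per round all use the stale matrix, a naive application gives a bound scaling with $\sqrt{n_u T}$ rather than $\sqrt{p^j T}$, i.e.\ an unwanted $\sqrt{n_u}$ factor in front of $\sqrt{T}$ instead of $\sqrt{m}$.

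The paper closes this gap with a dedicated lemma (\cref{lem:selfNorm}) that bounds $\sum_{t}\sum_{k=1}^{K_t}\norm{x_{t,k}}_{M_{t-1}^{-1}}$ when a random number $K_t$ of vectors share the same $M_{t-1}$, showing the expectation scales with $\sqrt{n\,\EE{K_t}}$ rather than $\sqrt{n\cdot\max_t K_t}$. Applied per cluster with $\EE{K_t}=p^j$, this yields $\sqrt{d p^j T\ln(\cdot)}$. Your proposal needs either this lemma or an equivalent batched-update argument; invoking the ``standard'' bound at that step does not suffice.
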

Note this recovers the regret bound in \cref{thm:main} when $\bar{p} = 1$. The regret bound for \club can also be generalized in a similar way.

Third, if the clustering structure is known, the setting would be equivalent to $m$ independent linear bandits, each with the expected number of rounds $p^j T$. Then the regret bound would be $O(d\sum_{j=1}^m\sqrt{p^j T} \ln(T))$ by \cite{abbasi2011improved} which matches the main term of ours \eqref{eq:SCLUBregret}. The upper bound reaches its maximum $O(d\sqrt{m T} \ln(T))$ when $p^1 = \cdots = p^m = 1/m$. Also the regret lower bound in this case is $\Omega(\sum_{j=1}^m \sqrt{d p^j T})$ by \cite{dani2008stochastic} which matches the main term of \eqref{eq:SCLUBregret} up to a term of $\sqrt{d} \ln(T)$.

Fourth, we would like to state the additional benefit of \sclub in privacy protection. To protect user privacy, when a user $i$ comes to be served, the learning agent will have the only access to the information of user $i$ and some aggregate information among users, but does not have the access to other individual users' information. In this sense, lack of other individual users' information will make existing graph-based methods \cite{gentile2014online,li2016collaborative} inapplicable. Our algorithm \sclub only uses information of user $i$ and aggregate information of clusters to split and merge.

Due to space limit, more discussions are put in
\ifsup
\cref{sec:more discussions}.
\else
the supplementary materials.
\fi

\section{Experiments}

\begin{figure*}[thb!]
\centering
{\small (1) Synthetic Dataset}\\
\includegraphics[width = 0.24\textwidth]{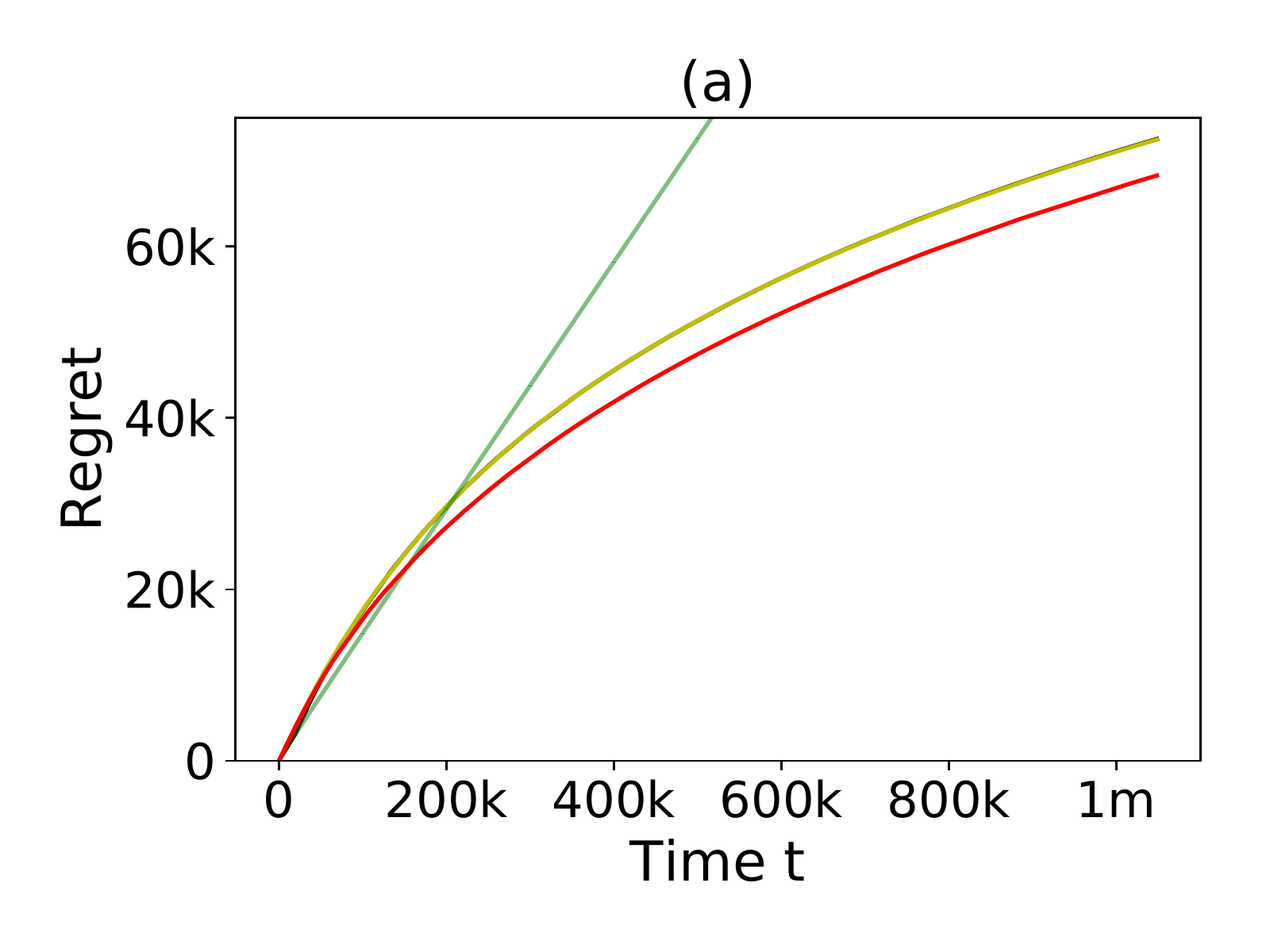}
\includegraphics[width = 0.24\textwidth]{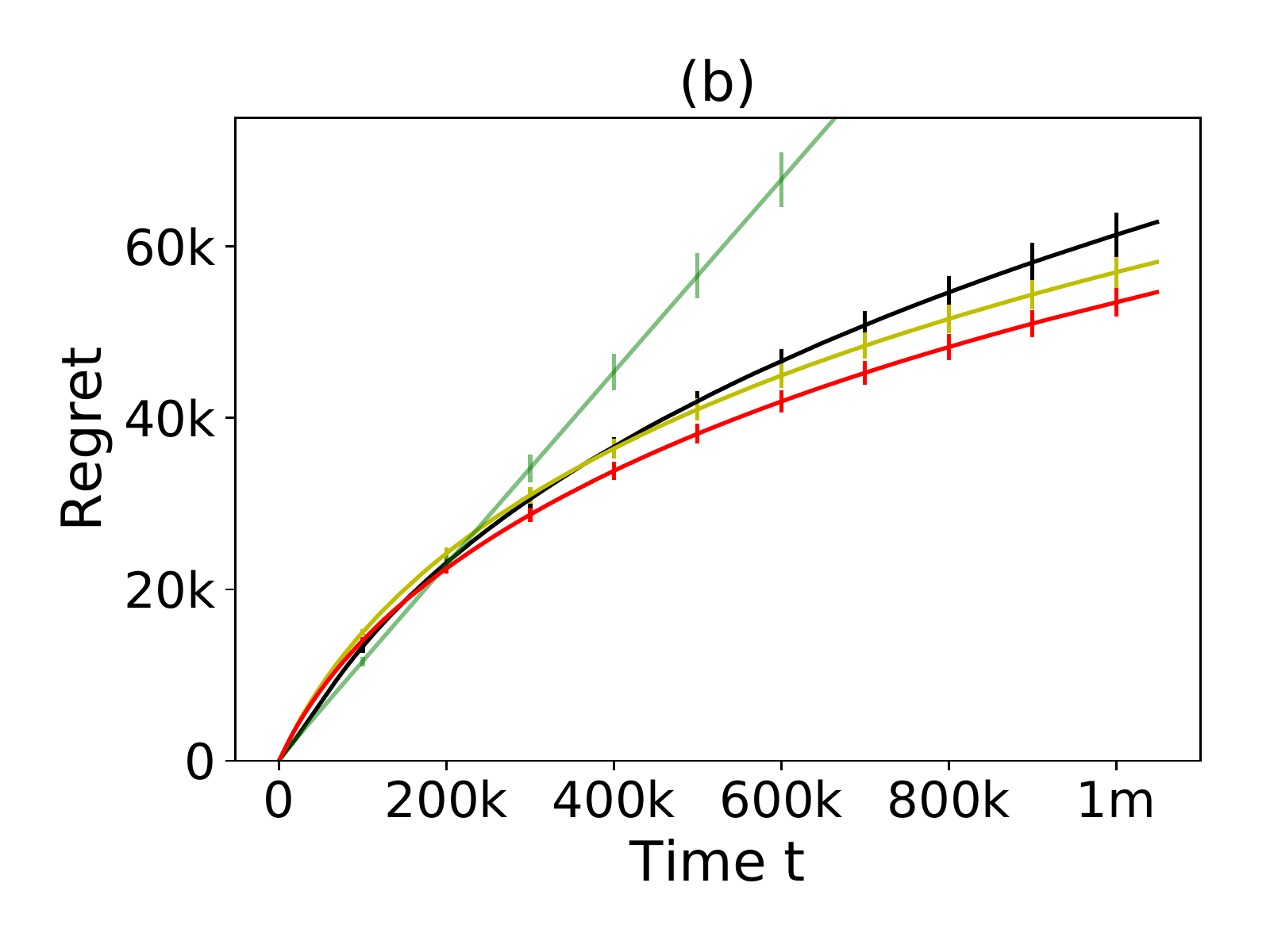}
\includegraphics[width = 0.24\textwidth]{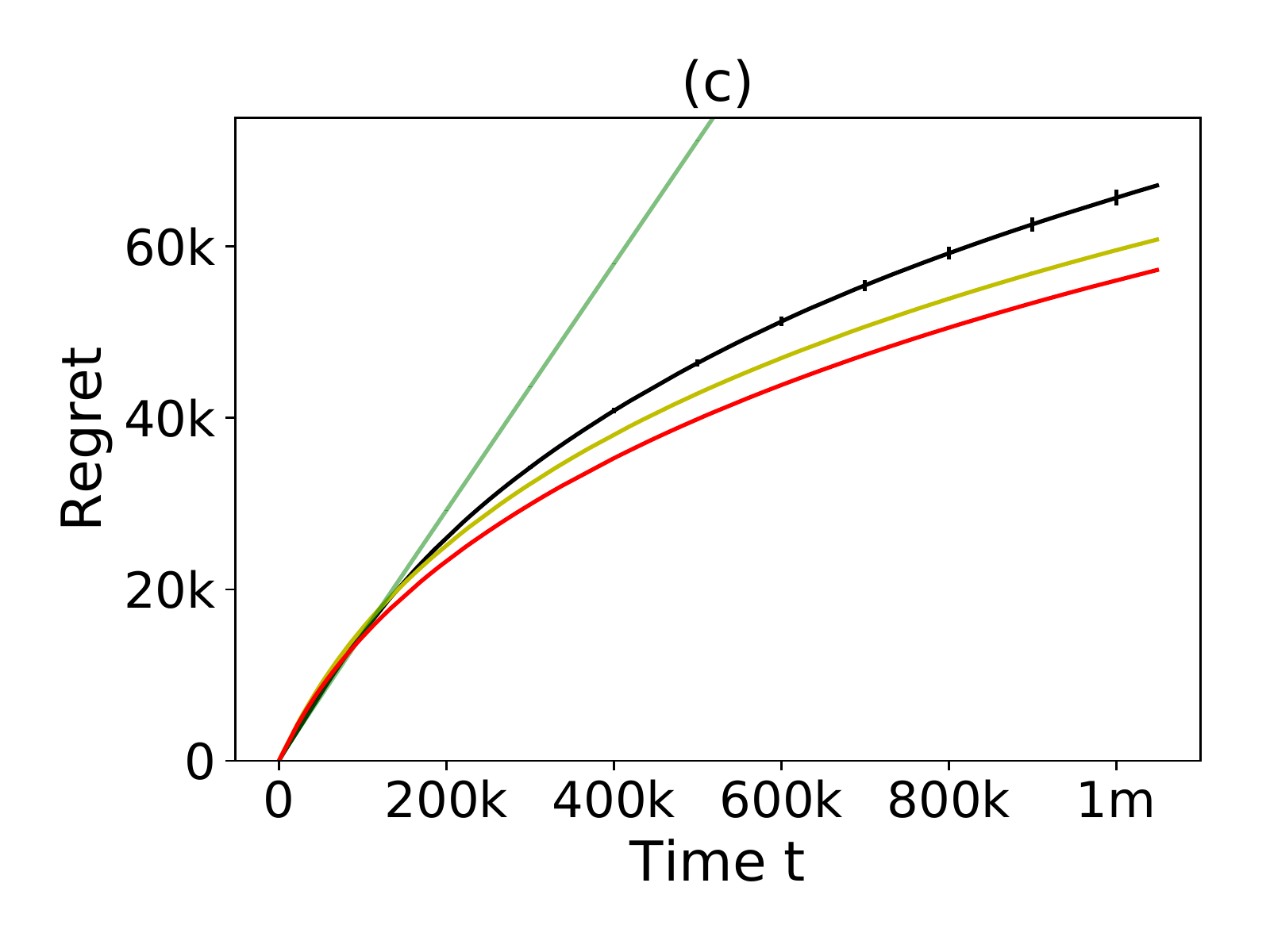}\\
{\small (2) MovieLens and Yelp Datasets}\\
\includegraphics[width = 0.24\textwidth]{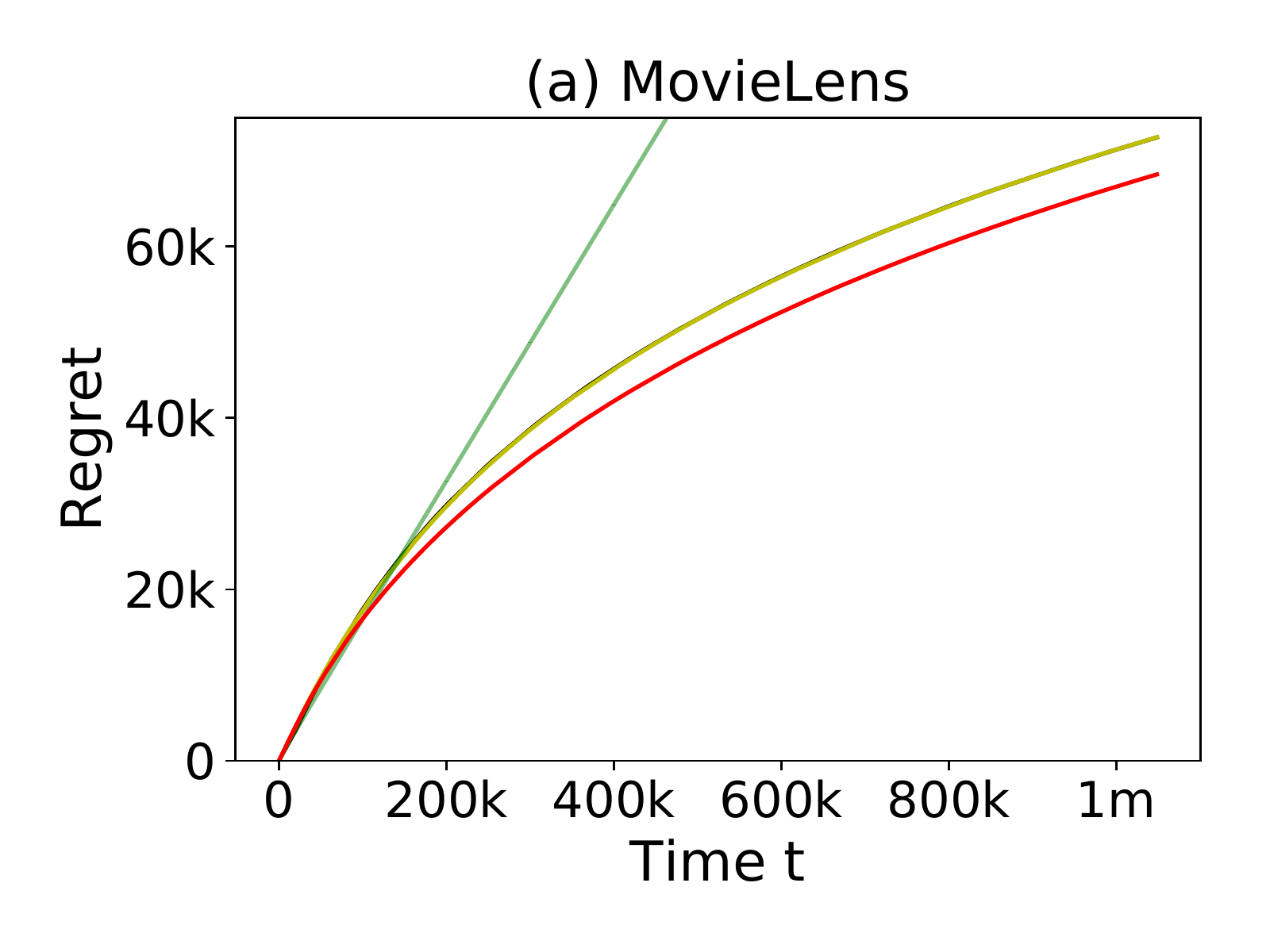}
\includegraphics[width = 0.24\textwidth]{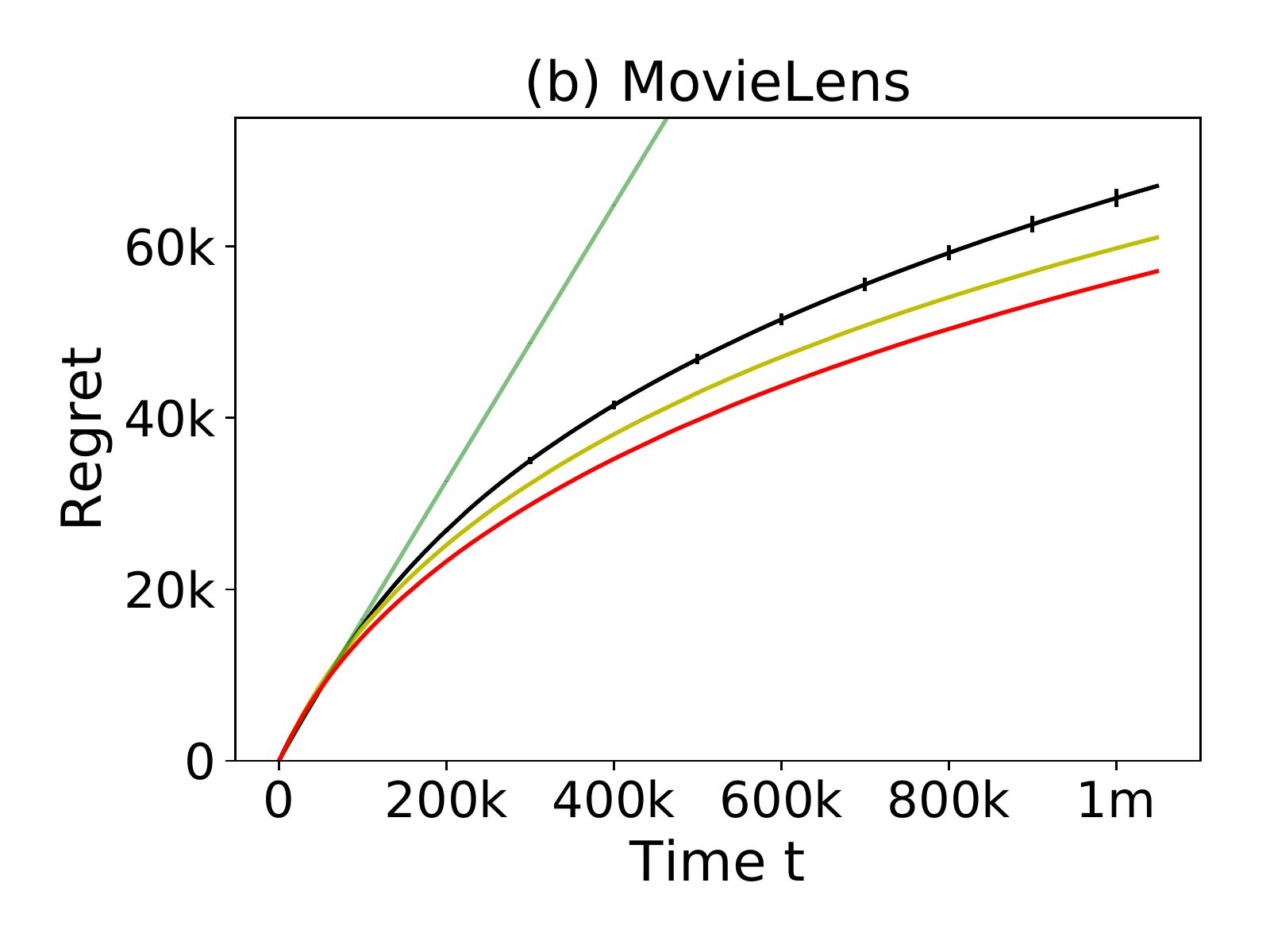}
\includegraphics[width = 0.24\textwidth]{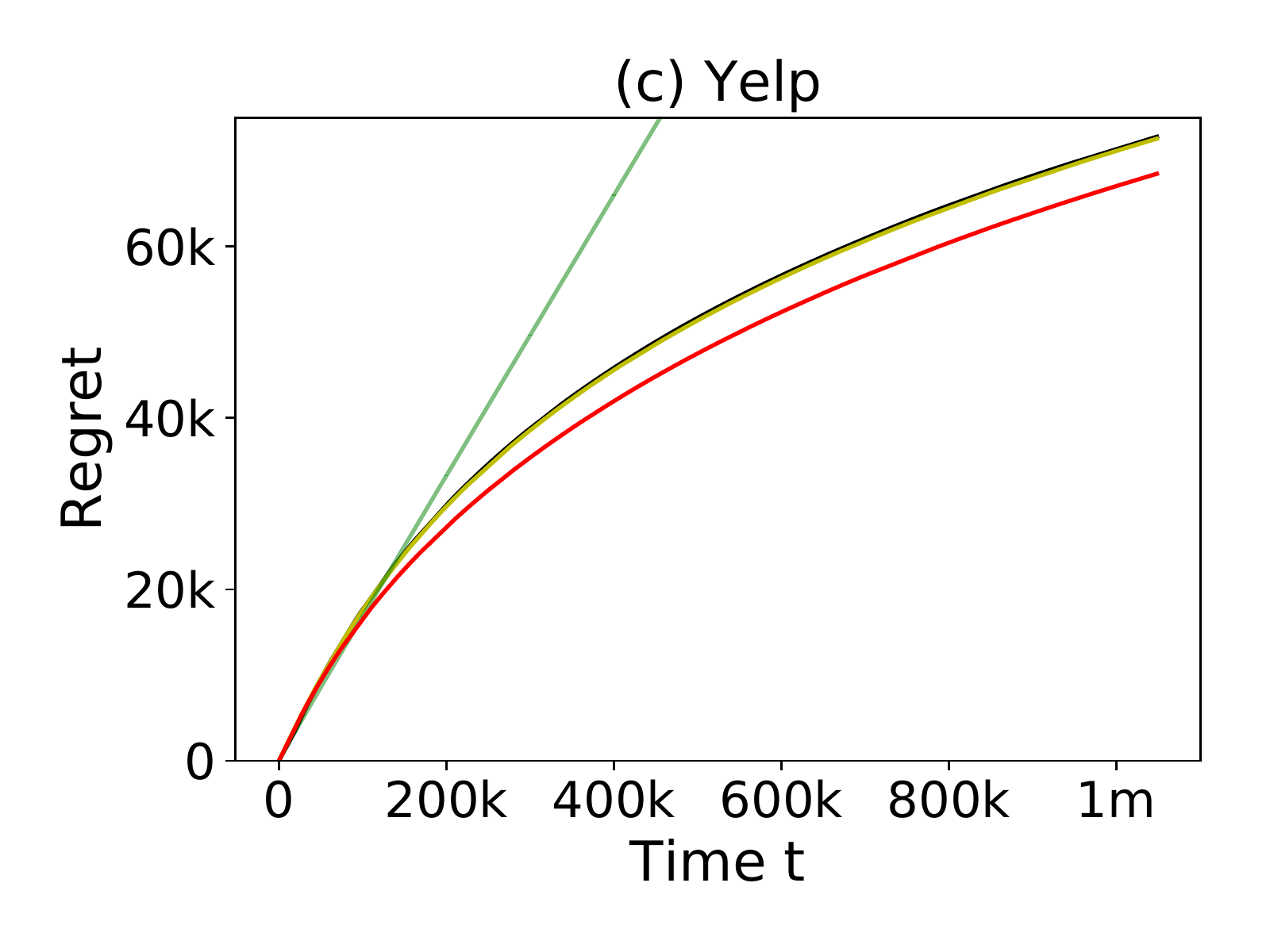}
\includegraphics[width = 0.24\textwidth]{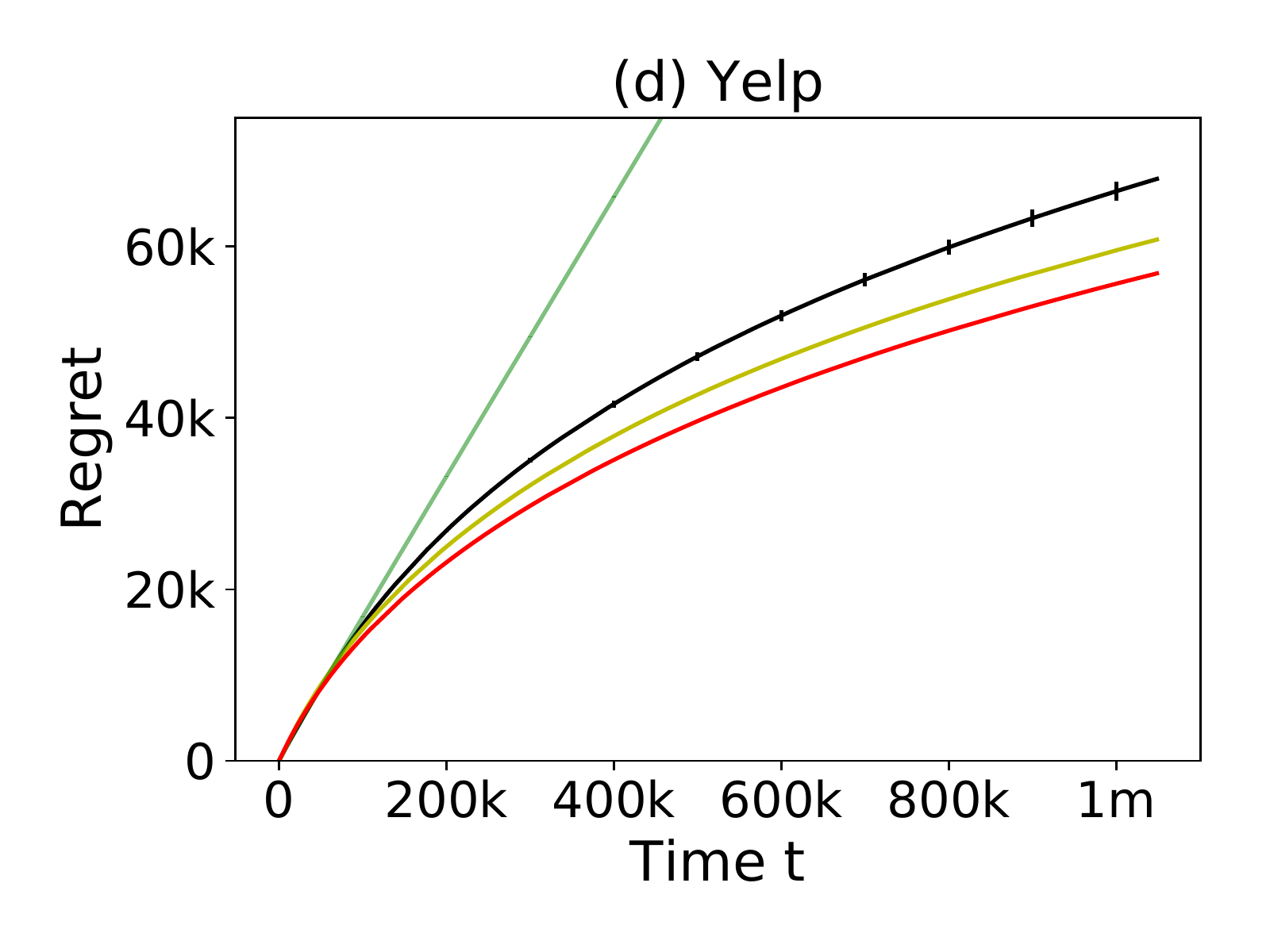}
\caption{
The figures compare \sclub (red) with \club (black), \linucbone (green) and \linucbind (yellow). The first row is for synthetic experiments and the second row is for real datasets, MovieLens and Yelp.
All the experiments are of $n_u=10^3$ users with $d=20, L=20$. We set $m=10$ for (1) synthetic experiments. (1a)(2a)(2c) are of uniform distribution over users, (1b) is of arbitrary distribution over clusters, where the users in the same cluster have the same frequency probabilities, (1c)(2b)(2d) are of arbitrary distribution over users. All results are averaged under $10$ random runs and the errorbars are computed by standard errors, which are standard deviations divided by $\sqrt{10}$.
\vspace{-0.3cm}
}
\label{fig:results}
\end{figure*}

\sclub algorithm is compared with \club \cite{gentile2014online}, \linucbone which uses a single estimated weight vector for all users and \linucbind which uses a separate estimated weight vector for each user on both synthetic and real datasets.

\subsection{Synthetic experiments} 

We consider a setting of $n_u=10^3$ users with $m=10$ clusters where each cluster contains equal number of users. The weight vectors $\theta_1, \ldots, \theta_m$ and item vectors at each round are first randomly drawn in $d-1$ ($d=20$) dimension with each entry a standard Gaussian variable, then normalized, added one more dimension with constant $1$, and divided by $\sqrt{2}$. The transformation is as follows
\begin{equation}
\label{eq:transform x}
x\mapsto \left(\frac{x}{\sqrt{2}\norm{x}},\ \frac{1}{\sqrt{2}}\right)\,. 
\end{equation}
This transformation on both the item vector $x$ and weight vector $\theta$ is to guarantee the mean $\ip{\theta, x}$ lies in $[0,1]$.
The parameters in all algorithms take theoretical values.
The comparisons on theoretical computational complexities of these algorithms is put in
\ifsup
\cref{sec:complexity of implementation}.
\else
supplementary materials.
\fi
The evolution of the regret as a function of time is shown in the first row of \cref{fig:results}
\footnote{The parameters of \club are theoretical values and might be suboptimal. We do not optimize the parameters of any algorithm and just use theoretical values of the parameters in all algorithms including \sclub.}
. The regrets at the end and total running times are given in
\ifsup
\cref{table:regrets} and \cref{table:running time}.
\else
supplementary materials.
\fi
The left figure (a) is under the setting of uniform distribution over all users where each user has a frequency $1/n_u$; middle figure (b) is under the setting of arbitrary distribution over clusters where the users in the same cluster have the same frequency probability; right figure (c) is under the setting of arbitrary distribution over users.

The performance of \sclub improves over \club by $5.94\%$ (\cref{fig:results}(1a)) even in the setting of \club where the frequency distribution over users is uniform. The reason is that \club uses connected components of graphs to represent clusters and if it wants to identify the underlying cluster of a user $i$, it has to delete every bad edge of its neighbors (and neighbors of neighbors, etc.). In comparison, \sclub is much faster, will split a user $i$ out if it finds any inconsistency from her current running cluster, and merge her to any consistent running cluster. The improvement is enlarged to be $13.02\%$ (\cref{fig:results}(1b)) in the setting of \cref{sec:setting} where users in the same cluster have the same frequency probability but users in different clusters might have different frequencies. \sclub is robust in the setting of arbitrary distribution over users where the assumptions in \cref{sec:setting} might fail and can still improve over \club by $14.69\%$ (\cref{fig:results}(1c)).

\subsection{Real datasets} 

We use the $20m$ MovieLens dataset \cite{harper2016movielens} which contains $20$ million ratings for $2.7\times 10^4$ movies by $1.38 \times 10^5$ users and Yelp dataset\footnote{\url{http://www.yelp.com/dataset\_challenge}} 
which contains $4.7$ million ratings of $1.57 \times 10^5$ restaurants from $1.18$ million users. The Yelp dataset is more sparse than MovieLens. For each of the two real datasets, we extract $10^3$ items with most ratings and $n_u = 10^3$ users who rate most. Then use the rating matrix to derive feature vectors of $d-1$ ($d=20$) dimension for all users by singular-value decomposition (SVD). The feature vectors are also processed as \eqref{eq:transform x} and the resulting vectors are regarded as the underlying weight vectors of each user. 

The performances are shown in \cref{fig:results}(2) where (2a)(2b) are for MovieLens dataset and (2c)(2d) are for Yelp dataset. (2a)(2c) are under the setting of uniform distribution over users and (2b)(2d) are under the setting of arbitrary distribution over users. Since there is no underlying clustering over the users, we do not experiment on the setting of arbitrary distribution over clusters like \cref{fig:results}(1b). The algorithms of \sclub and \club adaptively find finer and finer clustering as more data flows in. The performance of \sclub improves over \club by $5.94\%, 14.84\%, 5.94\%, 16.24\%$ respectively.

\section{Conclusions and Future Work}
In this paper, we extend the existing setting for online clustering of bandits to include non-uniform distribution over users. The new set-based algorithm together with both split and merge operations clusters users adaptively and is proven with a better theoretical guarantee on cumulative regrets than the existing works. Our algorithm also has an additional benefit of privacy protection. The experiments on both synthetic and real datasets show that our algorithm is consistently better then the existing works.

One interesting future direction is to explore the asymmetric relationships between users,
whereas all existing works use symmetric relationships including ours. For example, recommendations of low-frequency users can use information (or feedback) from high-frequency users, but not vice versa. This idea can be extended by using nested clusters
 where an underlying cluster of frequent users can be a subset of another underlying cluster containing many infrequent users. 
Another interesting problem is to generalize the idea of our method to collaborative filtering of both users and items such as \citet{li2016collaborative}. Also extending \cite{li2016collaborative} to the general setting of changing item set would be a challenge one.

\section*{Acknowledgements}
This work was partially supported by The Chinese University of Hong Kong Direct Grants [ID: 4055073 and 4055104]. Wei Chen is partially supported by the National Natural Science Foundation of China (Grant No. 61433014).

\bibliographystyle{named}
\bibliography{ref}

\ifsup
\appendix
\newpage
\onecolumn

\section{Proof of \cref{thm:main}}
\label{sec:proof of sclub}

Recall that $\tau$ denotes the index of time step. First we prove a sufficent condition on the number of collected samples when will the estimated weight vectors for individual users are good enough.
\begin{lemma}
\label{lem:theta-hat-good-when-T}
Fix any user $i$. Let $A(\delta) = \frac{1024}{\lambda_x^2} \ln \frac{512d}{\lambda_x^2 \delta}$. It holds with probability at least $1 - \delta$ that
\begin{align*}
\norm{\hat{\theta}_{i,\tau} - \theta_i} \le \frac{R \sqrt{d\ln\left(1 + \frac{T_{i, \tau}}{d}\right) + 2\ln\frac{2}{\delta} } + \sqrt{A(\delta/2)}}{\sqrt{T_{i, \tau} \lambda_x / 8}} < \frac{\gamma}{4}
\end{align*}
when $T_{i, \tau} > \frac{8192}{\gamma^2 \lambda_x^3} \ln \frac{1024 d}{\lambda_x^2 \delta} =: B(\delta)$.
\end{lemma}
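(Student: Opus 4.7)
The plan is to combine two standard ingredients: the self-normalized confidence bound for the ridge-regression estimator $\hat{\theta}_{i,\tau} = S_i^{-1} b_i$, and a matrix-concentration lower bound on $\lambda_{\min}(S_i)$ derived from the item-regularity hypothesis. The bound on $\|\hat\theta_{i,\tau}-\theta_i\|$ then falls out from the routine inequality $\|\hat\theta_{i,\tau}-\theta_i\|\le\|\hat\theta_{i,\tau}-\theta_i\|_{S_i}/\sqrt{\lambda_{\min}(S_i)}$, and the strict comparison against $\gamma/4$ is a short algebraic check once $T_{i,\tau}>B(\delta)$.

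First I would apply Theorem 2 of Abbasi-Yadkori, P\'al, and Szepesv\'ari (2011) to the ridge estimator with regularizer $\lambda=1$. Since $\|\theta_i\|\le 1$ and the noise is $R$-sub-Gaussian by assumption, this yields, with probability at least $1-\delta/2$,
\begin{equation*}
\|\hat\theta_{i,\tau}-\theta_i\|_{S_i} \;\le\; R\sqrt{d\ln(1+T_{i,\tau}/d) + 2\ln(2/\delta)} \;+\; 1,
\end{equation*}
after using $\ln\det(S_i)\le d\ln(1+T_{i,\tau}/d)$ by AM--GM on the eigenvalues. This accounts for the first summand in the numerator of the lemma.

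Next I would lower-bound $\lambda_{\min}(S_i)$ via a Tropp-style matrix Chernoff (or matrix Bernstein) inequality applied to $\sum_{k=1}^{T_{i,\tau}} x_k x_k^\top$. The item-regularity assumption $\mathbb{E}[xx^\top]\succeq \lambda_x I$ together with the calibrated sub-Gaussian tail on $(\theta^\top x)^2$ with variance $\sigma^2\le \lambda_x^2/(8\log(4L))$ is exactly what Tropp's inequality needs to give, with probability at least $1-\delta/2$,
\begin{equation*}
\lambda_{\min}(S_i)\;\ge\;T_{i,\tau}\lambda_x/8,
\end{equation*}
provided $T_{i,\tau}$ exceeds a threshold of order $A(\delta/2)=\tfrac{1024}{\lambda_x^2}\ln\tfrac{1024 d}{\lambda_x^2\delta}$. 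The residual $\sqrt{A(\delta/2)}$ appearing in the numerator of the stated bound absorbs the $\|\theta_i\|\le 1$ regularization contribution from the previous step via the trivial inequality $1\le\sqrt{A(\delta/2)}$, valid on the range of $T_{i,\tau}$ under consideration.

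A union bound over the two high-probability events and division by $\sqrt{\lambda_{\min}(S_i)}\ge\sqrt{T_{i,\tau}\lambda_x/8}$ then delivers the displayed upper bound on $\|\hat\theta_{i,\tau}-\theta_i\|$. To finish, I would plug in $T_{i,\tau}>B(\delta)=\tfrac{8192}{\gamma^2\lambda_x^3}\ln\tfrac{1024d}{\lambda_x^2\delta}$ and perform a brief algebraic comparison showing that each summand in the numerator, once divided by $\sqrt{T_{i,\tau}\lambda_x/8}$, falls below $\gamma/8$; the dominant contribution is the self-normalized piece, behaving like $R\sqrt{d\ln T/(T\lambda_x)}$, which crosses below $\gamma/8$ at exactly the chosen threshold. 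The step I expect to be the main obstacle is the matrix-concentration step: one must feed the precise constants from the sub-Gaussian tail $\sigma^2\le\lambda_x^2/(8\log(4L))$ into Tropp's inequality so that the failure probability produces exactly the $\ln(512d/(\lambda_x^2\delta))$ inside $A(\delta)$ and so that the eigenvalue lower bound comes out cleanly as $T_{i,\tau}\lambda_x/8$ rather than carrying an unwanted additive slack term that would require re-tuning $B(\delta)$.
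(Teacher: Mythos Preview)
Your proposal is correct and follows essentially the same route as the paper: combine the Abbasi-Yadkori--P\'al--Szepesv\'ari self-normalized bound with an eigenvalue lower bound $\lambda_{\min}(S_{i,\tau})\ge T_{i,\tau}\lambda_x/8$, union-bound, divide, and finish with an algebraic check (the paper packages the last step as a separate lemma that splits the numerator exactly as you propose, bounding each piece by $\gamma/8$). The only notable difference is the source of the eigenvalue bound: rather than deriving it from a generic Tropp matrix-Chernoff inequality, the paper invokes Claim~1 of \cite{gentile2014online} (and a companion lemma from \cite{li2019online}), which is a ready-made result tailored to precisely the item-regularity assumption $\sigma^2\le\lambda_x^2/(8\log(4L))$ and already yields the constants in $A(\delta)$; your observation that this step is where the constants are delicate is exactly why the paper outsources it.
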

\begin{proof}
By \cite[Claim 1]{gentile2014online} and \cite[Lemma 7]{li2019online}, with probability at least $1 - \delta_1$,
\begin{align*}
\lambda_{\min}(S_{i, \tau}) \ge T_{i, \tau} \lambda_x/8
\end{align*}
when $T_{i, \tau} \ge A(\delta_1)$. Further by \cite[Theorem 2]{abbasi2011improved}, with probability at least $1 - \delta_1 - \delta_2$,
\begin{align*}
&\norm{\hat{\theta}_{i, \tau} - \theta_{i}}_{S_{i, \tau - 1}} \le R \sqrt{d\ln\left(1 + \frac{T_{i, \tau}}{d} \right) + 2\ln\frac{1}{\delta_2}} + \sqrt{A(\delta_1)}
\end{align*}
when $T_{i, \tau} \ge A(\delta_1)$, where we replace $\lambda$ in that theorem by $A(\delta_1) \lambda_x / 8$. Thus with probability at least $1 - \delta_1 - \delta_2$, 
\begin{align*}
&\norm{\hat{\theta}_{i,\tau} - \theta_i} \le \frac{\norm{\hat{\theta}_{i, \tau} - \theta_{i}}_{S_{i, \tau - 1}}}{\sqrt{\lambda_{\min}(S_{i, \tau})}} \le \frac{R \sqrt{d\ln\left(1 + \frac{T_{i, \tau}}{d} \right) + 2\ln\frac{1}{\delta_2}} + \sqrt{A(\delta_1)}}{\sqrt{T_{i, \tau} \lambda_x/8}}
\end{align*}
when $T_{i, \tau} \ge A(\delta_1)$. The last term is less than $\gamma/4$ when
\begin{align*}
T_{i, \tau} > \frac{8}{\gamma^2 \lambda_x}\max\left\{A(\delta_1), \quad 4R^2d\log\frac{32R^2}{\gamma^2 \lambda_x}, \quad 4R^2\ln\frac{1}{\delta_2} \right\}\,.
\end{align*}
by Lemma \ref{lem:T-for-ratio-less-than-gamma}.

Putting the conditions on $T_{i, \tau}$ together and letting $\delta_1 = \delta_2 = \delta / 2$, we need $T > \frac{8192}{\gamma^2 \lambda_x^3} \ln \frac{1024 d}{\lambda_x^2 \delta}$
under the assumption on the numeric relations between parameters: $\ln\frac{2}{\delta} \ge d \ln\frac{32 R^2}{\gamma^2 \lambda_x}$ and $R^2 < \frac{2048}{\gamma^2 \lambda_x^3}$. Here we assume that $\delta$ can be as small as $\frac{1}{\sqrt{T}}$ and $T$ is large enough.
\end{proof}


	
\begin{lemma}
\label{lem:p-hat-good-enough-when-t}
Fix any user $i$. With probability at least $1 - \delta$, it holds that
\begin{equation}
\abs{\hat{p}_{i, \tau} - p_i} \le \sqrt{\frac{\ln(2 c \tau^2 / \delta)}{2\tau}} < \frac{\gamma_p}{4}
\end{equation}
when $\tau \ge \frac{16}{\gamma_p^2}\ln\frac{512 c}{\gamma_p^4 \delta}$.
\end{lemma}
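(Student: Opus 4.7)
The plan is to exploit the fact that the user arrivals are i.i.d.\ from the fixed distribution $p$, so the indicator sequence $\mathbf{1}\{\bi_s = i\}$ for $s = 1, \ldots, \tau$ consists of independent Bernoulli$(p_i)$ random variables whose normalized sum is exactly $\hat{\bp}_{i,\tau} = \bT_{i,\tau}/\tau$. This reduces the statement to a standard Chernoff/Hoeffding-style concentration bound, followed by a deterministic manipulation to convert the resulting condition on $\tau$ into the explicit closed form appearing in the lemma.

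First I would invoke Hoeffding's inequality on these bounded i.i.d.\ summands to obtain, for each fixed $\tau \ge 1$ and each $\epsilon > 0$,
\begin{equation*}
\PP{\abs{\hat{\bp}_{i,\tau} - p_i} \ge \epsilon} \le 2 \exp(-2\tau \epsilon^2).
\end{equation*}
Choosing $\epsilon = \epsilon_\tau := \sqrt{\ln(2c\tau^2/\delta)/(2\tau)}$ turns the right-hand side into $\delta/(c\tau^2)$. A union bound over all $\tau \ge 1$ then gives the desired uniform-in-$\tau$ guarantee with total failure probability at most $\delta \sum_{\tau \ge 1} 1/(c\tau^2) \le \delta$ provided $c$ is chosen to absorb the Basel constant $\pi^2/6$ (this is the role of the universal constant $c$ in the statement; the quantifier ``with probability at least $1-\delta$'' is a uniform statement over the appearance process up to time $\tau$, not a pointwise one). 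This establishes the first inequality in the display.

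For the second inequality $\epsilon_\tau < \gamma_p/4$, the defining condition unrolls to $\tau > (8/\gamma_p^2)\ln(2c\tau^2/\delta)$, which is implicit in $\tau$ and requires a standard bootstrap argument (the kind of step carried out in \cref{lem:T-for-ratio-less-than-gamma} of the companion analysis). I would plug the explicit lower bound $\tau_0 := (16/\gamma_p^2)\ln(512c/(\gamma_p^4 \delta))$ into the right-hand side, expand $\ln(2c\tau^2/\delta) = \ln(2c/\delta) + 2\ln\tau$, and verify by a short algebraic calculation that at $\tau = \tau_0$ one has $\tau_0 \ge (8/\gamma_p^2)\ln(2c\tau_0^2/\delta)$; since the left-hand side grows linearly while the right-hand side grows only logarithmically in $\tau$, the inequality persists for all larger $\tau$.

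The only subtle point is the bootstrap step: one must be careful that the factors $16$ in the leading coefficient and $512/\gamma_p^4$ inside the logarithm are exactly what is needed to dominate the $2\ln\tau$ term after substitution. The clean way to discharge this is to write $\tau_0 = (16/\gamma_p^2)L$ with $L = \ln(512c/(\gamma_p^4\delta))$ and check that $2\ln\tau_0 \le L + \ln(2c/\delta)$ is absorbed by the slack between $16$ and $8$; everything else is routine. No further probabilistic ingredients are needed beyond Hoeffding plus a union bound, so the main work is really the bookkeeping on constants in the deterministic inversion.
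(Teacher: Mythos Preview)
Your proposal is correct and matches the paper's own proof, which is very terse: it simply notes that $T_{i,\tau}$ is a sum of $\tau$ i.i.d.\ Bernoulli($p_i$) variables, invokes Hoeffding for the first inequality, and cites an external lemma (\cite[Lemma~9]{li2019online}) for the deterministic inversion giving the second inequality. Your write-up fills in precisely the two details the paper leaves implicit---the union bound over $\tau$ that explains the $c\tau^2$ factor inside the logarithm, and the explicit bootstrap to solve $\tau > (8/\gamma_p^2)\ln(2c\tau^2/\delta)$---so there is nothing to add.
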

\begin{proof}
Note that $T_{i, \tau}$ is a sum of $\tau$ independent Bernoulli random variable with mean $p_i$. The first inequality is obtained by Chernoff-Hoeffding inequality (Lemma \ref{lem:additive chernoff bound}) with probability at least $1 - \delta$. The second inequality holds when $\tau \ge \frac{16}{\gamma_p^2}\ln\frac{512 c}{\gamma_p^4 \delta}$ by \cite[Lemma 9]{li2019online}.
\end{proof}

With the help of the above lemmas, the main theorem is ready to be proved.

\begin{proof}[of Theorem \ref{thm:main}]
By Lemma \ref{lem:p-hat-good-enough-when-t}, it holds with probability at least $1 - \delta_1$ that
\begin{align*}
\abs{\hat{p}_{i, \tau} - \hat{p}_{i', \tau}} > \gamma_p / 2 > \sqrt{\frac{2 \ln(2 c \tau^2 / \delta_1)}{\tau}}
\end{align*}
for any two users $i, i'$ with different frequencies $p_{i} \neq p_{i'}$, when $\tau \ge \frac{16}{\gamma_p^2}\ln\frac{512 c n_u}{\gamma_p^4 \delta_1}$, or in any phase $s \ge \ceil{\log_2 \left(\frac{16}{\gamma_p^2}\ln\frac{512 c n_u}{\gamma_p^4 \delta_1} \right)}$.
Then a user will be split out with high probability after the first appearance of next stage if her running cluster contains other users of different frequencies. By \cite[Lemma 8]{li2019online}, with probability at least $1 - \delta_1 - \delta_2$, for any user $i$, her cluster only contains the users of the same frequency when
\begin{align*}
\tau \ge \frac{32}{\gamma_p^2}\ln\frac{512 c n_u}{\gamma_p^4 \delta_1} + \frac{16}{p_i}\ln\frac{n_u T}{\delta_2} + \frac{4}{p_i} =: \tau_1(p_i, \delta_1, \delta_2)\,.
\end{align*}

Fix user $i$. By Lemma \ref{lem:theta-hat-good-when-T}, with probability at least $1 - \delta_3$, for any user $i$,
\begin{align*}
\norm{\hat{\theta}_{i,\tau} - \theta_i} \le \frac{R \sqrt{d\ln(1 + \frac{T_{i, \tau}}{d}) + 2\ln\frac{2}{\delta_3} } + \sqrt{A(\delta_3/(2 n_u))}}{\sqrt{T_{i, \tau}\lambda_x/8}} < \frac{\gamma}{4}
\end{align*}
when 
\begin{align*}
T_{i, \tau} &\ge B(\delta_3) = \frac{8192}{\gamma^2 \lambda_x^3} \ln \frac{1024 d}{\lambda_x^2 \delta_3} \,.
\end{align*}
By \cite[Lemma 8]{li2019online}, the condition on $T_{i, \tau}$ holds when $\tau \ge \frac{16}{p_i}\ln\frac{T}{\delta_4} + \frac{4}{p_i}B(\delta_3) =: \tau_2(p_i, \delta_3, \delta_4)$ with probability at least $1 - \delta_4$.
Replace $\delta_3, \delta_4$ by $\delta_3 / n_u, \delta_4 / n_u$. Then with probability at least $1-\delta_3-\delta_4$,
\begin{align*}
\norm{\hat{\theta}_{i,\tau} - \hat{\theta}_{i',\tau}} \ge \frac{\gamma}{2}
\end{align*}
for any two users $i,i'$ with the same frequency $p$ but different weight vectors when $\tau \ge \tau_2(p, \delta_3, \delta_4)$.

Thus with probability at least $1-\sum_{k=1}^4 \delta_k$, for any user $i$, her cluster $j$ only contains users with the same frequencies when $\tau \ge \tau_1(p_i, \delta_1, \delta_2)$ and 
\begin{align*}
\norm{\hat{\theta}_{\tau}^j - \hat{\theta}_{i',\tau}} < \frac{\gamma}{4}
\end{align*}
for at most one true sub-cluster $V \ni i'$ in cluster $j$. By \cite[Lemma 8]{li2019online}, with probability at least $1 - \delta_5$, all users with frequency $p$ will appear at least once after
\begin{align*}
\frac{16}{p}\ln\frac{n_u T}{\delta_5} + \frac{4}{p}
\end{align*}
rounds. Therefore, we have proved the following claim.
\begin{claim}
With probability at least $1 - \sum_{k=1}^5 \delta_k$, for each user $i$, her cluster only contains the users of the same frequency and same prediction vector when
\begin{align*}
\tau \ge 2 \max\left\{\tau_1(p_i, \delta_1, \delta_2), \quad \tau_2(p_i, \delta_3, \delta_4)\right\} + \frac{16}{p_i}\ln\frac{n_u m T}{\delta_5} + \frac{4}{p_i}\,.
\end{align*}
\end{claim}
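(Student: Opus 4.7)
My plan is to assemble the Claim from the three high-probability ingredients established just above it: the frequency-estimate accuracy (Lemma \ref{lem:p-hat-good-enough-when-t}), the weight-vector-estimate accuracy (Lemma \ref{lem:theta-hat-good-when-T}), and the user-appearance frequency bound (\cite[Lemma 8]{li2019online}). The structure is a union bound over these five events, followed by deterministic reasoning that translates estimation accuracy into the split condition firing and the merge condition refusing to fuse dissimilar clusters, completed by an appearance argument guaranteeing that every in-cluster user is actually processed before the deadline.

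First I would carry out the frequency-separation step. Applying Lemma \ref{lem:p-hat-good-enough-when-t} at confidence $\delta_1/n_u$ and union-bounding over users, once $\tau \geq \tau_1(p_i,\delta_1,\delta_2)$ every pair of users $i,i'$ with $p_i\neq p_{i'}$ satisfies $|\hat p_{i,\tau}-\hat p_{i',\tau}|>\gamma_p/2>2\alpha_p F(\tau)$. The split rule in \cref{alg:split} then triggers the next time $i$ appears while still sharing a cluster with such an $i'$, and the second summand of $\tau_1$ (invoked with $\delta_2/n_u$) guarantees this appearance occurs. Combining these yields that the cluster of user $i$ contains only users with the same frequency $p_i$ once $\tau$ exceeds $\tau_1(p_i,\delta_1,\delta_2)$.

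Second I would carry out the weight-vector separation step. By Lemma \ref{lem:theta-hat-good-when-T} with confidence $\delta_3/n_u$, once $T_{i,\tau}\geq B(\delta_3/n_u)$ we have $\|\hat\theta_{i,\tau}-\theta_i\|<\gamma/4$ for every user; the term inside $\tau_2$ translates this sample-count condition into a round-count one, using the appearance lemma with $\delta_4/n_u$. Then for any two users with $\theta_i\neq\theta_{i'}$, triangle inequality gives $\|\hat\theta_{i,\tau}-\hat\theta_{i',\tau}\|\geq\gamma/2$, which exceeds $\alpha_\theta(F(T_{i,\tau})+F(\tilde T^j))$ once $T_{i,\tau}$ and $\tilde T^j$ are large enough, so the pivot-based split condition against $\tilde\theta^{\bj}$ fires whenever the cluster still mixes distinct true parameters (the pivot itself inherits $\gamma/4$ accuracy for the dominant sub-cluster by the same Lemma \ref{lem:theta-hat-good-when-T} aggregated over the cluster's samples).

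Third I need to patch the two separations with the requirement that the offending users actually appear and get checked within the phase, which is where the leading factor $2$ and the additive tail $\frac{16}{p_i}\ln(n_u m T/\delta_5)+\frac{4}{p_i}$ enter. The pivot $\tilde\theta^{\bj}$ is frozen for a whole phase (line \ref{algoline: set pivot}), so a guarantee established mid-phase only takes effect from the next phase; since phases double, waiting one extra phase inflates $\tau$ by at most a factor of $2$, giving the $2\max\{\tau_1,\tau_2\}$ bound. The additive $\frac{16}{p_i}\ln(n_u m T/\delta_5)+\frac{4}{p_i}$ term, obtained from \cite[Lemma 8]{li2019online} at confidence $\delta_5/(n_u m)$, ensures that inside this next phase every remaining user in the cluster appears at least once, so any lingering inconsistent user is split; no hidden heterogeneous user can survive.

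The main obstacle I foresee is handling the interaction between split and merge: I must rule out that the merge rule in \cref{alg:merge} ever fuses two clusters whose members have different true $\theta$ or $p$ once we are past the burn-in. This follows by reversing the concentration inequalities already used: both merge thresholds, $\frac{\alpha_\theta}{2}(F(T^{j_1})+F(T^{j_2}))$ and $\alpha_p F(\tau)$, are strictly smaller than $\gamma/2$ and $\gamma_p/2$ respectively in the regime we are considering, so dissimilar clusters fail the test. The algorithm's restriction that only \emph{checked} clusters can be merged further removes stale pivots that have not yet absorbed the new evidence. Once this consistency check is in place, the union bound over the five events delivers the claimed probability $1-\sum_{k=1}^5\delta_k$ and the round threshold asserted by the Claim.
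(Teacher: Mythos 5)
Your proposal follows essentially the same route as the paper: a union bound over the same five events, with Lemma \ref{lem:p-hat-good-enough-when-t} plus the appearance lemma giving frequency purity by $\tau_1$, Lemma \ref{lem:theta-hat-good-when-T} plus the appearance lemma giving the $\gamma/2$ weight-vector separation by $\tau_2$, and a final application of the appearance lemma at confidence $\delta_5$ supplying the additive $\frac{16}{p_i}\ln\frac{n_u m T}{\delta_5}+\frac{4}{p_i}$ term. Your explicit accounting for the factor $2$ via the frozen per-phase pivot and the doubling phase lengths, and your remark on merge safety, are elaborations consistent with (and slightly more detailed than) the paper's own argument.
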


Let $\delta_1 = \ldots = \delta_5 = \delta / 10$. The requirement for $\tau$ would be
\begin{align*}
\tau \ge O \left(\frac{1}{\gamma_p^2}\ln\frac{n_u}{\gamma_p \delta} + \frac{1}{p_i} \ln\frac{n_u T}{\delta} + \frac{1}{p_i \gamma^2 \lambda_x^3} \ln\frac{d}{\lambda_x \delta} + \frac{1}{p_i}\ln\frac{n_u mT}{\delta} \right) =: T_0(i) \,.
\end{align*}
Note that in the first $T_0(i)$ rounds, the frequency of user $i$ is $p_i$, thus will cancel the $p_i$ in the denominator. So far we have proved that our SCLUB will split well when a running cluster contains users of different true clusters. 

Then we discuss the performance of SCLUB when running clusters are subsets of true clusters. Since $\abs{\frac{T_1}{n_1 T} - p} < a$ and $\abs{\frac{T_2}{n_2 T}- p} < a$ can derive $\abs{\frac{T_1+T_2}{(n_1+n_2)T} - p} < a$, the merged set of two running clusters with the same frequency probability will keep the frequency accuracy; for a running cluster which only contains users of frequency probability $p_i$ and $\tau \ge T_0(i)$, then the distance between the averaged frequency of the running cluster and any of the user inside will be less than $\gamma_p / 2$, thus the running cluster will not be split up. For a good running cluster (which is a subset of true cluster), the distance of its estimated weight vector with the true weight vector will be at most $\gamma/4$ when $T^j$ is large enough with high probability like Lemma \ref{lem:theta-hat-good-when-T}, then the distance between the estimated weight vector of the cluster and the user inside will be at most $\gamma/2$, thus the cluster will not be split up. For two good running clusters with the same weight vector, the distance of their estimated weight vector will be at most $\gamma/2$ with high probability, thus they will be merged. Since there are at most $n_u$'s merge operation and each merge operation will produce a new subset of users, the high-probability property only needs to be guaranteed by at most $n_u$'s true sub-clusters, thus the requirement for $\tau \ge T_0(i)$ will be doubled to guarantee the operations on good running clusters with frequency probability $p_i$ are good.

Then by the regret bound of linear bandits \cite[Theorem 3]{abbasi2011improved} and taking $\delta = \frac{1}{\sqrt{T}}$, the cumulative regret satisfies
\begin{align*}
R(T) &\le \sum_{j=1}^m 4 \beta \sqrt{d p^j T \ln(T/d)}  + O\left( \left(\frac{1}{\gamma_p^2} + \frac{n_u}{\gamma^2 \lambda_x^3}\right) \ln(T) \right)\\
  &= O(d\sqrt{m T} \ln(T))
\end{align*}
where $\beta \ge R \sqrt{d \ln(1 + T / d) + 2 \ln(4 m n_u)}$.
\end{proof}
	
\section{Proofs in the Discussion Part}

\begin{proof}[of \cref{thm:club}]
With the help of the analysis in Lemma \ref{lem:theta-hat-good-when-T}, we are ready to prove that \cref{thm:club}.

By \cite[Lemma 8]{li2019online}, with probability at least $1 - \delta$, $\norm{\hat{\theta}_{i,\tau} - \theta_i} \le \frac{\gamma}{4}$ holds for all users when
\begin{align*}
\tau &\ge \frac{16}{p_{\min}}\ln\frac{2 n_u T}{\delta} + \frac{4}{p_{\min} } B\left(\frac{\delta}{2 n_u}\right) = O\left( \frac{1}{p_{\min}} \ln\frac{T}{\delta} + \frac{1}{p_{\min} \gamma^2 \lambda_x^3} \ln\frac{d n_u}{\lambda_x \delta} \right)\,.
\end{align*}
Then the estimators of weight vectors are accurate enough for all users. Thus the edges between users of different clusters will be deleted by this time and the true clustering will be formed.

Then by the regret bound of linear bandits \cite[Theorem 3]{abbasi2011improved} and taking $\delta = \frac{1}{\sqrt{T}}$, the cumulative regret satisfies
\begin{align*}
R(T) &\le \sum_{j=1}^m 4 \beta \sqrt{d p^j T \ln(T/d)} + O\left(\frac{1}{p_{\min} \gamma^2 \lambda_x^3} \ln(T) \right)\\
  &= O(d \sqrt{m T} \ln(T))\,.
\end{align*}
\end{proof}

\begin{proof}[of \cref{thm:multiple users}]
The key challenge in this theorem is that the number of users per round can be as large as the total number of users $n_u$. The result for the self-normalized series with a fixed number of updates is not satisfying (see \cite[Lemma 6]{li2019online}) since then the regret would have a term of $\sqrt{n_u}$ before $\sqrt{T}$ instead of $\sqrt{m}$. However a good property is that the expected number of users per round can be well controlled, which leads to a good bound for the self-normalized series. We state the result in Lemma \ref{lem:selfNorm}.
\end{proof}

\section{Technical Lemmas}

\begin{lemma}[Hoeffding's Inequality \cite{hoeffding1963probability}]
\label{lem:additive chernoff bound}
Let $X_1, \ldots, X_n$ be independent random variable with common support $[0, 1]$. Let $\bar{X} = \frac{1}{n} \sum_{i=1}^n X_i$ and $\EE{\bar{X}} = \mu$. Then for all $a \ge 0$,
\begin{align*}
&\PP{\bar{X} - \mu \ge a} \le  \exp(- 2 n a^2),\quad \PP{\bar{X} - \mu \le -a} \le  \exp(- 2 n a^2)\,.
\end{align*}
\end{lemma}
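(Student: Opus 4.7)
The plan is to prove this via the classical Chernoff-method argument: apply Markov's inequality to the exponentiated sum, use independence to factor the moment generating function, bound each factor by Hoeffding's lemma, and finally optimize over the Chernoff parameter.

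First I would fix $\nu > 0$ and write
\begin{align*}
\PP{\bar{X} - \mu \ge a} = \PP{\exp\!\left(\nu \sum_{i=1}^n (X_i - \mu_i)\right) \ge \exp(\nu n a)},
\end{align*}
where $\mu_i = \EE{X_i}$, so that $\sum_i \mu_i = n\mu$. By Markov's inequality this is at most $\exp(-\nu n a)\,\EE{\exp(\nu \sum_i (X_i - \mu_i))}$. Independence of the $X_i$ then factors the expectation into $\prod_i \EE{\exp(\nu (X_i - \mu_i))}$.

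The key auxiliary step is Hoeffding's lemma: for any zero-mean random variable $Y$ with support $[c, d]$, $\EE{\exp(\nu Y)} \le \exp(\nu^2 (d-c)^2 / 8)$. I would prove this by using convexity of $y \mapsto e^{\nu y}$ to upper-bound it by the secant line on $[c,d]$, taking expectations, and then analyzing the resulting function $\phi(\nu) = -\nu p c + \ln(1 - p + p e^{\nu(d-c)})$ (with $p = -c/(d-c)$) via its Taylor expansion: $\phi(0) = \phi'(0) = 0$ and $\phi''(\nu) \le (d-c)^2/4$ for all $\nu$, giving $\phi(\nu) \le \nu^2 (d-c)^2/8$. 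Applying this to $Y_i = X_i - \mu_i$, which lies in $[-\mu_i, 1-\mu_i]$ with range $1$, yields $\EE{\exp(\nu(X_i - \mu_i))} \le \exp(\nu^2/8)$.

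Combining these gives $\PP{\bar{X} - \mu \ge a} \le \exp(-\nu n a + n \nu^2 / 8)$ for every $\nu > 0$. Minimizing the exponent by choosing $\nu = 4a$ produces the bound $\exp(-2 n a^2)$, establishing the upper-tail inequality. The lower tail $\PP{\bar{X} - \mu \le -a}$ follows by applying the identical argument to the variables $1 - X_i$, which also lie in $[0,1]$ and have mean $1 - \mu_i$, reducing the lower tail to an upper-tail event of the same form.

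The main obstacle is the Hoeffding lemma step, i.e., establishing the sub-Gaussian MGF bound $\exp(\nu^2(d-c)^2/8)$ with the sharp constant $1/8$; all other steps (Markov, independence, optimization over $\nu$) are essentially mechanical. Everything else is bookkeeping, and since the constant $2$ in the exponent $\exp(-2na^2)$ is exactly what the $1/8$ constant in Hoeffding's lemma produces after optimizing $\nu = 4a$, getting that constant right is the only delicate point of the proof.
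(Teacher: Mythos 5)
Your proof is correct and is the standard Chernoff-method derivation of Hoeffding's inequality; the paper itself offers no proof of this lemma, simply citing Hoeffding (1963), so there is nothing to compare against beyond noting that your argument is the classical one. One cosmetic slip: with $p = -c/(d-c)$ the auxiliary function should read $\phi(\nu) = \nu c + \ln\bigl(1 - p + p e^{\nu(d-c)}\bigr)$ (equivalently $-\nu p (d-c)$ for the first term), not $-\nu p c$; with that correction $\phi(0) = \phi'(0) = 0$ and $\phi''(\nu) \le (d-c)^2/4$ go through exactly as you describe, and the choice $\nu = 4a$ indeed yields the constant $2$ in $\exp(-2na^2)$.
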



\begin{lemma} \label{lem:T-for-ratio-less-than-gamma}
\begin{align*}
\frac{R \sqrt{d \ln(1 + \frac{T}{d}) + 2\ln\frac{1}{\delta_2} } + \sqrt{A(\delta_1)}}{\sqrt{T\lambda_x/8}} < \frac{\gamma}{4}  
\end{align*}
is satisfied when
\begin{align*}
T > \frac{8}{\gamma^2 \lambda_x}\max\left\{A(\delta_1), \quad 4R^2d\log\frac{32R^2}{\gamma^2 \lambda_x}, \quad 4R^2\ln\frac{1}{\delta_2} \right\}\,.
\end{align*}
\end{lemma}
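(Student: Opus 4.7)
The plan is to dissect the numerator into three separately controllable pieces and demand that each one be small relative to $\sqrt{T\lambda_x/8}$, with the slack in $\gamma/4$ distributed between them. Using the elementary inequality $\sqrt{a+b}\le\sqrt{a}+\sqrt{b}$ for nonnegative $a,b$, I would first rewrite
\begin{align*}
R\sqrt{d\ln\bigl(1+\tfrac{T}{d}\bigr)+2\ln\tfrac{1}{\delta_2}}+\sqrt{A(\delta_1)} \le R\sqrt{d\ln\bigl(1+\tfrac{T}{d}\bigr)}+R\sqrt{2\ln\tfrac{1}{\delta_2}}+\sqrt{A(\delta_1)}.
\end{align*}
It then suffices to force each of these three terms to be at most a constant fraction of $\tfrac{\gamma}{4}\sqrt{T\lambda_x/8}$. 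Taking the fraction to be $1/3$ (say), this reduces the goal to three sufficient conditions on $T$, one per term, after which the $\max$ structure in the statement simply packages them together.

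The second and third terms rearrange directly. The inequality $R\sqrt{2\ln(1/\delta_2)}\le c\,\gamma\sqrt{T\lambda_x}$ is equivalent, after squaring, to a lower bound of the form $T\ge C\,R^2\ln(1/\delta_2)/(\gamma^2\lambda_x)$, which is the third entry in the $\max$; analogously, $\sqrt{A(\delta_1)}\le c\,\gamma\sqrt{T\lambda_x}$ rearranges to $T\ge C'\,A(\delta_1)/(\gamma^2\lambda_x)$, the first entry. In both cases the generous factor $8/(\gamma^2\lambda_x)$ in the stated condition is chosen large enough to absorb the numerical constants produced by this straightforward algebra.

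The first term is the main obstacle, because $T$ appears on both sides of $R^2 d\ln(1+T/d)\le c\,T\gamma^2\lambda_x$. The plan is a short bootstrap: for $T\ge d$ replace $\ln(1+T/d)$ by $\ln(2T/d)$, then check that substituting the candidate $T_0=\frac{32R^2 d}{\gamma^2\lambda_x}\log\frac{32R^2}{\gamma^2\lambda_x}$ already satisfies the inequality and that the map $T\mapsto T\gamma^2\lambda_x/(C R^2 d)-\ln(1+T/d)$ is increasing for $T\ge T_0$. This is the standard transcendental pattern ``if $T\ge a\log a$ then $T\ge b\log(T/d)$ for the appropriate $a,b$,'' and is routine once the boundary case is verified; it produces the second entry in the $\max$. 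Combining all three conditions completes the proof, and I would not try to sharpen the constants since the statement is explicitly loose (the factor $8$ in front of the $\max$ absorbs the $1/3$-splitting constants).
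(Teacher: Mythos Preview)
Your proposal is correct and follows essentially the same strategy as the paper: split the numerator, dispatch the $\sqrt{A(\delta_1)}$ and $\ln(1/\delta_2)$ contributions by direct rearrangement, and handle the transcendental $d\ln(1+T/d)$ term via the standard ``$T\gtrsim a\log a$'' bootstrap (the paper cites \cite[Lemma 9]{li2019online} for that step). The only cosmetic difference is that the paper splits into two pieces first---bounding $\sqrt{A(\delta_1)}/\sqrt{T\lambda_x/8}$ and $R\sqrt{d\ln(1+T/d)+2\ln(1/\delta_2)}/\sqrt{T\lambda_x/8}$ each by $\gamma/8$---and then controls the two summands \emph{inside} the remaining square root separately, rather than applying $\sqrt{a+b}\le\sqrt a+\sqrt b$ up front to get a three-way split as you do.
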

\begin{proof}
To prove this, it is enough to prove $\frac{\sqrt{A(\delta_1)}}{\sqrt{T\lambda_x/8}} \le \frac{\gamma}{8}$ and $\frac{R \sqrt{d \ln(1 + \frac{T}{d}) + 2\ln\frac{1}{\delta_2} }}{\sqrt{T\lambda_x/8}} < \frac{\gamma}{8}$.

The first condition is equivalent to $T \ge \frac{8 A(\delta_1)}{\gamma^2 \lambda_x}$. 

The second condition can be derived by $(a) \frac{d\ln(1+T/d)}{T\lambda_x} < \frac{\gamma^2}{16 R^2}$ and $(b)\frac{2\ln\frac{1}{\delta_2}}{T\lambda_x} < \frac{\gamma^2}{16 R^2}$. (a) is satisfied when $T > \frac{32R^2d}{\gamma^2 \lambda_x}\log\frac{32R^2}{\gamma^2 \lambda_x}$ by \cite[Lemma 9]{li2019online}. (b) is equivalent to $T > \frac{32R^2}{\gamma^2 \lambda_x}\ln\frac{1}{\delta_2}$.

Thus summarizing the three conditions for $T$ to finish the proof.
\end{proof}

\begin{lemma} \label{lem:selfNorm}
Let $M_n = M + \sum_{t=1}^n \sum_{k=1}^{K_t} x_{t,k} x_{t,k}^{\top}$, where $M \in \RR^{d \times d}$ is a strictly positive definite matrix and $x_{t,k} \in \RR^d$ is a $d$-dimensional column vector. If $\sum_{k=1}^{K_t} \norm{x_{t,k}}_{M_{t-1}^{-1}}^2 \leq 1$, then
\begin{equation}
\sum_{t=1}^n \sum_{k=1}^{K_t} \norm{x_{t,k}}_{M_{t-1}^{-1}}^2 \leq 2\log\frac{\det(M_n)}{\det(M)}\,.
\end{equation}
Furthermore, if $\norm{x_{t,k}}_2 \leq L, K_t \le A, \EE{K_t} = K, M = \lambda I, \lambda \geq A L^2, \forall t,k$, then
\begin{equation}
\label{eq:selfNormSum}
\sum_{t=1}^n \sum_{k=1}^{K_t} \norm{x_{t,k}}_{M_{t-1}^{-1}} \leq \sqrt{2dnK \log\left(1 + \frac{n A L^2}{\lambda d}\right)}.
\end{equation}
\end{lemma}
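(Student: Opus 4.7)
The plan is to prove the two inequalities separately, both following a potential-function style argument standard for self-normalized processes, but with adaptations to accommodate the multiple updates $x_{t,1},\dots,x_{t,K_t}$ per round. The first bound comes from a determinant-ratio telescoping; the second follows by Cauchy--Schwarz together with a trace--determinant bound.

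For the first inequality, I would use the matrix identity $\det(M_t)=\det(M_{t-1})\det(I+U_t U_t^\top)$, where the rows of $U_t$ are the transformed vectors $u_{t,k}=M_{t-1}^{-1/2}x_{t,k}$. Note that $\|u_{t,k}\|^2=\|x_{t,k}\|_{M_{t-1}^{-1}}^2$, so the hypothesis gives $\mathrm{tr}(U_tU_t^\top)=\sum_k\|u_{t,k}\|^2\le 1$, which forces every eigenvalue $\lambda_i$ of $U_tU_t^\top$ to lie in $[0,1]$. On that interval the elementary inequality $\log(1+\lambda_i)\ge \lambda_i/2$ holds, so summing over eigenvalues gives
\[
\log\frac{\det(M_t)}{\det(M_{t-1})}=\sum_i\log(1+\lambda_i)\ge\tfrac12\,\mathrm{tr}(U_tU_t^\top)=\tfrac12\sum_{k=1}^{K_t}\|x_{t,k}\|_{M_{t-1}^{-1}}^2.
\]
Telescoping from $t=1$ to $n$ yields the first claim.

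For the second inequality, I would first check the precondition of the first part: since $M_{t-1}\succeq \lambda I$ implies $\|x_{t,k}\|_{M_{t-1}^{-1}}^2\le L^2/\lambda$, the hypothesis $\lambda\ge AL^2$ together with $K_t\le A$ gives $\sum_{k=1}^{K_t}\|x_{t,k}\|_{M_{t-1}^{-1}}^2\le 1$. Then I would apply Cauchy--Schwarz in the form
\[
\sum_{t=1}^n\sum_{k=1}^{K_t}\|x_{t,k}\|_{M_{t-1}^{-1}}\le\sqrt{\Bigl(\sum_{t=1}^n K_t\Bigr)\Bigl(\sum_{t=1}^n\sum_{k=1}^{K_t}\|x_{t,k}\|_{M_{t-1}^{-1}}^2\Bigr)},
\]
using $\EE{\sum_t K_t}=nK$ under expectation (this is where the $K=\EE{K_t}$ hypothesis enters; I read the claimed bound as holding in expectation, or deterministically when the Cauchy--Schwarz count is replaced by its expected value). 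The second factor is controlled by the first part of the lemma. To convert $\log(\det M_n/\det M)$ into an explicit quantity, I would apply AM--GM on the eigenvalues of $M_n$: $\det(M_n)\le(\mathrm{tr}(M_n)/d)^d$, and $\mathrm{tr}(M_n)\le \lambda d+nAL^2$ (using $\|x_{t,k}\|^2\le L^2$ and $K_t\le A$ term by term). Dividing by $\det(M)=\lambda^d$ and taking logs gives $\log(\det M_n/\det M)\le d\log(1+nAL^2/(\lambda d))$, and combining everything yields the desired $\sqrt{2dnK\log(1+nAL^2/(\lambda d))}$ bound.

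The routine obstacle is just bookkeeping the eigenvalue inequality $\log(1+\lambda)\ge\lambda/2$ on $[0,1]$ correctly in the matrix setting; the more delicate point is the role of $K=\EE{K_t}$ in the right-hand side. The straightforward Cauchy--Schwarz yields the random quantity $\sqrt{\sum_t K_t}$, so to obtain an $\sqrt{nK}$ factor one must either take expectations on the outside (applying Jensen to pull the square root inside) or use that $\sum_t K_t$ concentrates around $nK$. I expect the cleanest route is to state the bound in expectation and invoke Cauchy--Schwarz for the expectation $\EE{\sqrt{\sum_t K_t}\cdot \sqrt{\sum_{t,k}\|x_{t,k}\|_{M_{t-1}^{-1}}^2}}\le \sqrt{\EE{\sum_t K_t}}\cdot\sqrt{\EE{\sum_{t,k}\|x_{t,k}\|_{M_{t-1}^{-1}}^2}}$, which is precisely the combination that delivers the stated $\sqrt{nK}$ factor.
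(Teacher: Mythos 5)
Your proposal is correct and follows essentially the same route as the paper: a determinant-ratio telescoping for the first inequality (you bound $\log\det(I+U_tU_t^\top)$ eigenvalue-by-eigenvalue via $\log(1+\lambda_i)\ge\lambda_i/2$, where the paper instead invokes $\det(I+\sum_k u_ku_k^\top)\ge 1+\sum_k\norm{u_k}^2$ followed by the scalar inequality $2\log(1+u)\ge u$ on $[0,1]$ --- the two are interchangeable), and Cauchy--Schwarz plus the trace--determinant bound $\det(M_n)\le(\trace(M_n)/d)^d$ for the second. Your observation that the second bound only makes sense in expectation is also how the paper reads it: its proof places the expectation on the left-hand side throughout and uses $\EE{\sum_t K_t}=nK$ exactly as you propose.
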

\begin{proof}
\begin{align*}
&\det(M_n)= \det(M_{n-1}) \det\left(I + M_{n-1}^{-1/2} \left(\sum_{k=1}^{K_n}x_{n,k} x_{n,k}^{\top} \right) M_{n-1}^{-1/2} \right)\\
\overset{(a)}{\ge}& \det(M_{n-1}) \left(1 + \sum_{k=1}^{K_n} \norm{x_{t,k}}_{M_{n-1}^{-1}}^2 \right) \ge \det(M) \prod_{t=1}^n \left(1 + \sum_{k=1}^{K_t} \norm{x_{t,k}}_{M_{t-1}^{-1}}^2 \right),
\end{align*}
where (a) is by Lemma A.3 of \cite{li2016contextual}. Then
\begin{align*}
\sum_{t=1}^n \sum_{k=1}^{K_t} \norm{x_{t,k}}_{M_{t-1}^{-1}}^2 \overset{(b)}{\le} \sum_{t=1}^n 2 \log\left(1 + \sum_{k=1}^{K_t} \norm{x_{t,k}}_{M_{t-1}^{-1}}^2 \right) \le 2\log\frac{\det(M_n)}{\det(M)}\,,
\end{align*}
in which (b) is due to $2\log(1+u) \geq u$ for $u\in[0,1]$.

If $\norm{x_{t,k}}_2 \leq L, K_t \le A, \EE{K_t} = K, M = \lambda I, \lambda \geq A L^2, \forall t,k$, then
\begin{align*}
&\EE{\sum_{t=1}^n \sum_{k=1}^{K_t} \norm{x_{t,k}}_{M_{t-1}^{-1}}} \le \left( \EE{\sum_{t=1}^n \sum_{k=1}^{K_t} \norm{x_{t,k}}_{M_{t-1}^{-1}} }^2 \right)^{1/2} \le \left( \EE{\sum_{t=1}^n K_t \sum_{k=1}^{K_t} \norm{x_{t,k}}_{M_{t-1}^{-1}}^2} \right)^{1/2} \\
\le &\left( \EE{\sum_{t=1}^n K_t \cdot 2 \log\frac{\det(M_n)}{\det(\lambda I)}} \right)^{1/2} \le \sqrt{2dnK \log\left(1 + \frac{n A L^2}{\lambda d}\right)}.
\end{align*}
\end{proof}

\section{More Discussions}
\label{sec:more discussions}

The assumptions on the gap parameters $\gamma_{\theta}, \gamma_{p}$ are trade-off between personalization and collaborative filtering. If the gap parameters are large enough, there is only one cluster containing all users; if the gap parameters are small enough, each user is one cluster. Neither case would perform well, because the first case completely ignores the individual preferences, and the second case does not exploit the similarity of users and the potential collaborative filtering advantage and recommends based only on each user’s very limited feedback. These parameters are for the assumption on perfect underlying clustering structure. The confidence radius of the estimates of weight vectors and frequencies is controlled by the $F$ function, which is decreasing as more data flows in. Thus as time goes by, the algorithm can adaptively find finer and finer clustering over users and each running clustering is good in the corresponding accuracy level. If there is not a perfect clustering structure, then by a similar analysis, we could derive an asymptotic regret bound to characterize the adaptive clustering behaviors. We omit this part and simply assume the gap exists.
\section{Complexity of Implementation}
\label{sec:complexity of implementation}

\paragraph{\sclub} Each recommendation takes $O(L d^2)$ time where the matrix inverse is updated by the Sherman–Morrison formula. The update after receiving feedback takes $O(d^2)$ time. The time for each split check and each merge check is $O(d)$. Due to the decreasing property of function $F$, the merge check will only perform on the cluster containing current user and the merge will continue until no merge is possible.

For the rounds where the current clustering structure is true, \sclub only needs to check split and merge but does not need to split and merge. The number of split check is $1$ and the number of merge check is $m$. So the time complexity for this part is $O(L d^2 + m d)$ per round.

For the rounds on exploring clustering structure, the time for each split is $O(d^2)$ and the time for each cluster merge is $O(d^3)$ where the matrix inverse and estimated weight vector for the new cluster need to recompute. Note the number of merge times is at most $n_u$. So the time for this part is $O(L d ^2 + n_u d^3)$. 

By the upper bound for rounds of exploration on clustering structure, the time complexity (in expectation) for $T$ rounds is 
\begin{align*}
O\left(TLd^2 + Tmd + \left(\frac{n_u d^3}{\gamma_p^2} + \frac{n_u^2 d^3}{\gamma^2 \lambda_x^3}\right) \ln(T) \right)\,.
\end{align*}

\paragraph{\club} The time for recommendation, update and check edge removal are the same. For the rounds of true clustering, CLUB only needs to check edge removal but does not need to cut edges. The number of such checks is (in expectation) $O\left(\sum_{j=1}^m p^j n^j\right)$. So the time for this part is $O\left(L d^2 + d\sum_{j=1}^m p^j n^j \right)$. For the rounds of exploring clustering structure, the total number of checks on edge removal (in expectation) is
\begin{align*}
O\left(\sum_{i=1}^{n_u} p_i \sum_{(i',i) \in E_1} \frac{R}{\min\{p_i, p_{i'}\} \gamma \lambda_x^2} \ln(T)\right)\,,
\end{align*}
where $E_1$ is the edge set after randomly initializing the graph. Once there is an edge removal, the time for recomputing connected component is $O(n_u \ln^{2.5} (n_u))$. The time for recomputing the estimated weight vectors is $O(m(d^3 + n_u d^2))$. Overall, the time complexity (in expectation) for $T$ rounds is
\begin{align*}
&O\left( TLd^2 + Td\sum_{j=1}^{m}p^j n^j + \frac{n_u d R}{p_{\min} \gamma \lambda_x^2} \ln(T) + n_u \abs{E_1} \log^{2.5}(n_u)  + m(n_u d^2 + d^3) \right)\,.
\end{align*}
Note that $\sum_{j=1}^{m} p^j n^j$ is usually larger than $m$.


The time complexities for both \linucbone and \linucbind are $O(TLd^2)$, where the recommendation and updates are mainly performed.

\section{Quantity Results for Experiments}
\label{sec:quantity results for experiments}

The regrets of \cref{fig:results} at the end are given in the \cref{table:regrets}, while total running times (wall-clock time) are shown in \cref{table:running time}. The experiments are run on Dell PowerEdge R920 with CPU of Quad Intel Xeon CPU E7-4830 v2 (Ten-core 2.20GHz) and memory of 512GB.

\begin{table*}[thb!]
\centering
\begin{tabular} {l|r|r|r|r|r|r|r}
Regret & (1a) & (1b) &(1c)  &(2a)  &(2b)  &(2c)  &(2d)  \\ 
\hline
\club& $72,546$ &$62,854$ &$67,116$ &$72,733$ &$67,053$ &$72,803$ &$67,887$\\
\linucbind &$72,481$ &$58,192$ &$60,805$ &$72,750$ &$61,042$ &$72,577$ &$60,804$\\
\linucbone &$151,519$ &$118,157$ &$151,097$ &$169,018$  &$168,796$ &$171,820$, &$171,199$\\
\sclub &$68,238$ &$54,672$ &$57,255$ &$68,413$ &$57,105$ &$68,482$ &$56,863$\\
\hline
\end{tabular}
\caption{The regrets at the end of running in \cref{fig:results}. The values are averaged over the $10$ random runs.}
\label{table:regrets}
\end{table*}

\begin{table*}[thb!]
\centering
\begin{tabular} {l|r|r|r|r|r|r|r}
Time (s) & (1a) & (1b) &(1c)  &(2a)  &(2b)  &(2c)  &(2d)\\ 
\hline
\club& $1,004$ &$1,040$ &$2,020$ &$1,145$  &$1,134$ &$1,357$ &$1,925$\\
\linucbind &$771$ &$797$ &$1,111$ &$1,752$  &$1,497$ &$995$ &$2,705$\\
\linucbone &$741$ &$768$ &$1,306$ &$1,327$  &$1,603$ &$993$ &$2,422$\\
\sclub &$4,284$ &$4,272$ &$4,855$ &$12,170$  &$6,008$ &$4,880$ &$8,196$\\
\hline
\end{tabular}
\caption{The total running time of compared algorithms in \cref{fig:results} in the unit of seconds(s). The values are averaged over the $10$ random runs.}
\label{table:running time}
\end{table*}

\fi

\end{document}